\newtheorem{theorem}{Theorem}[section]
\newtheorem{definition}{Definition}
\newtheorem{proposition}[theorem]{Proposition}
\newtheorem{assumption}{Assumption}
\newtheorem{lemma}[theorem]{Lemma}
\newcommand{\CLSI}[0]{C_{\mathsf{LSI}}}
\newcommand{\sep}[0]{\,||\,}
\title{Mirror Mean-Field Langevin Dynamics}
\author{
Anming Gu\thanks{Equal contribution.}\\
University of Texas at Austin \\ 
{\texttt{anminggu@cs.utexas.edu}}
\and
Juno Kim\footnotemark[1]\\
UC Berkeley \\
{\texttt{junokim@berkeley.edu}}
}
\date{\today}
\def\thm@space@setup{%
  \thm@preskip=\parskip \thm@postskip=0pt
}
\newcommand{\KL}[0]{\mathsf{KL}}
\newcommand{\FI}[0]{\mathsf{FI}}
\newcommand{\Ent}[0]{\mathsf{Ent}}
\newcommand{\R}[0]{\mathbb{R}}
\newcommand{\E}[0]{\mathbb{E}}
\newcommand{\Tr}{\mathsf{Tr}}
\begin{document}
\maketitle

\begin{abstract}
    The mean-field Langevin dynamics (MFLD) minimizes an entropy-regularized nonlinear convex functional on the Wasserstein space over $\mathbb{R}^d$, and has gained attention recently as a model for the gradient descent dynamics of interacting particle systems such as infinite-width two-layer neural networks. However, many problems of interest have constrained domains, which are not solved by existing mean-field algorithms due to the global diffusion term. We study the optimization of probability measures constrained to a convex subset of $\mathbb{R}^d$ by proposing the \emph{mirror mean-field Langevin dynamics} (MMFLD), an extension of MFLD to the mirror Langevin framework. We obtain linear convergence guarantees for the continuous MMFLD via a uniform log-Sobolev inequality, and uniform-in-time propagation of chaos results for its time- and particle-discretized counterpart.
\end{abstract}

\section{Introduction}
In this work, we study the problem of minimizing an entropy-regularized functional on the space of probability measures constrained to a convex subset $\mathcal{X}\subseteq\mathbb{R}^d$:
\begin{equation}\label{eq:mfld_L_intro}
\mathcal{L}(\mu) := F(\mu) + \lambda {\mathsf{Ent}}(\mu),
\end{equation}
where $\mu\in\mathcal{P}_2(\mathcal{X})$, $F: \mathcal{P}_2(\mathcal{X})\to\mathbb{R}$ is a linearly convex functional, and $\mathsf{Ent}(\mu) = \int \log\frac{d\mu}{dx}d\mu$ is the entropy of $\mu$. Such problems arise naturally in statistical inference and machine learning, such as when studying infinite-width limits of neural networks \citep{mei2018meanfield,nitanda2022convex,suzuki2023feature,takakura2024meanfield}, tensor decomposition \citep{chizat2018global}, sparse spikes deconvolution \citep{chizat2018global}, density estimation \citep{suzuki2023meanfield} and discrepancy minimization \citep{gretton12,suzuki2023meanfield}. Often, \emph{nonconvex} optimization of a particle system can be formulated as an instance of \eqref{eq:mfld_L_intro} by lifting to the space of measures.

In the simplest case, we wish to sample from a distribution $\mu_*\propto e^{-f/\lambda}$ for a potential $f:\mathcal{X}\to\mathbb{R}$, which corresponds to \eqref{eq:mfld_L_intro} where $F(\mu)=\int fd\mu$ is linear. When $\mathcal{X}=\mathbb{R}^d$, a classical approach is to employ a discretization of the Langevin dynamics
\begin{equation}
dX_t = -\nabla f(X_t)dt + \sqrt{2\lambda}dB_t,\label{eq:langevin_dynamics}    
\end{equation}
where $B_t$ is standard Brownian motion. The trajectory $\mu_t = \operatorname{Law}(X_t)$ can be interpreted as the Wasserstein gradient flow minimizing the KL divergence with respect to $\mu_*$ \citep{Jordan98}; this remarkable connection has led to many fruitful developments in both sampling and optimization, see for instance \citet{Wibisono2018sampling,chewi2023logconcavesampling} for an exposition. The convergence of \eqref{eq:langevin_dynamics} in various metrics (Wasserstein, TV, KL, $\chi^2$, R\'enyi) has been extensively analyzed under strong log-concavity or functional inequalities such as log-Sobolev or Poincar\'e
inequalities \citep{Wibisono2018sampling,wibisono2019proximal,vempalawibisono,durmus17,durmus19,li2021,chewi22,mousavi2023}.

In the constrained or ill-conditioned case, however, \eqref{eq:langevin_dynamics} cannot be applied as the diffusion term will cause mass to escape $\mathcal{X}$. This problem can be solved by imposing an alternative non-flat geometry on $\mathcal{X}$ and running the mirror Langevin dynamics (MLD), the analogue of mirror descent for sampling \citep{hsieh2018,zhang2020wasserstein}. By introducing a barrier function $\phi:\mathcal{X}\to\mathbb{R}$, the mirror map $\nabla\phi$ induces an isometry between the primal space $\mathcal{X}$ with the Hessian metric $\nabla^2\phi$, and the dual space $\mathbb{R}^d$ with the metric $\nabla^2\phi^*$. This allows us to apply a transformed diffusion in the dual space:
\begin{equation}\label{eq:mld_intro}
\begin{split}
X_t &= \nabla\phi^*(Y_t),\\
dY_t &= -\nabla f(X_t)\,dt \;+\; \sqrt{2\lambda \nabla^2\phi(X_t)}\, dB_t.
\end{split}
\end{equation}
The convergence of \eqref{eq:mld_intro} under a mirror Poincar\'{e} inequality have been established in \citet{chewi2020mld} and various discretization schemes have been analyzed in \citet{zhang2020wasserstein,jiang2021mirror,ahn2021efficient,li2022mirror}. In particular, \citet{ahn2021efficient} proposed a discretized mirror Langevin algorithm which ensures vanishing bias under mild assumptions on the mirror map.

Returning to the general problem \eqref{eq:mfld_L_intro}, when $\mathcal{X}=\mathbb{R}^d$, this can be solved efficiently by running the mean-field Langevin dynamics (MFLD) \citep{nitanda2022convex,chizat2022meanfield} extending \eqref{eq:langevin_dynamics}, given by the McKean-Vlasov process
\begin{equation}\label{eq:mfld_sde_intro}
    dX_t = -\nabla \frac{\delta F(\mu_t)}{\delta\mu}(X_t)dt + \sqrt{2\lambda}dB_t,
\end{equation}
which is the Wasserstein gradient flow minimizing $\mathcal{L}$. The convergence of \eqref{eq:mfld_sde_intro} and its time discretization has been established under a log-Sobolev inequality (LSI) in \citep{nitanda2022convex}. Following works have studied \emph{propagation of chaos} \citep{sznitman1991}, that is, the approximation error induced when $\mu_t$ is replaced by a finite particle system. A dedicated line of work has obtained error bounds which converge to zero uniformly for all time as the number of particles $N\to\infty$ by leveraging uniform LSI \citep{chen2023uniform,suzuki2023uniformintime,suzuki2023meanfield}. Recent works have further shown that these errors can be made independent of the LSI constant \citep{nitanda2024improved,chewi2024uniform,nitanda2025}.

The same analyses may be applied if $\mathcal{X}$ is a complete Riemannian manifold without boundary, e.g., a hypersphere or torus. However, the problem for general $\mathcal{X}$ has remained open. Nonetheless, many applications of MFLD require the domain of optimization to be (implicitly or explicitly) constrained, e.g., trajectory inference \citep{chizat2022trajectoryinferencemeanfieldlangevin,gu2025partially,gu2025private}, computation of Wasserstein barycenters \citep{chizat2023doublyregularizedentropicwasserstein,vaskevicius2023barycenter}, computation of discrepancy measures \citep{suzuki2023meanfield} or signal deconvolution \citep{chizat2018global} with bounded support, and optimization of neural networks \citep{nitanda2022convex,nitanda2025} with constrained parameters. A more detailed overview of related works is provided in Appendix \ref{app:related}.

\subsection{Contributions}


We propose the \emph{mirror mean-field Langevin dynamics} (MMFLD), which unifies the mirror \eqref{eq:mld_intro} and mean-field \eqref{eq:mfld_sde_intro} analyses to solve the constrained distributional optimization problem \eqref{eq:mfld_L_intro} for general convex $\mathcal{X}\subseteq \mathbb{R}^d$ and (linear) convex $F:\mathcal{P}_2(\mathcal{X})\to\mathbb{R}^d$. To the best of our knowledge, this constitutes the first algorithm minimizing \eqref{eq:mfld_L_intro} with global convergence guarantees. In particular, we provide non-asymptotic convergence rates for both the continuous-time mean-field flow (Section \ref{sec:mirror_mfld}), and the time- and particle-discretized algorithm (Section \ref{sec:discretized_analysis}). Crucially, we show the recent advances in propagation of chaos analysis extend gracefully to constrained domains.

\paragraph{Notation.} The Euclidean norm on $\mathbb{R}^d$ is denoted as $\|\cdot\|$. We use $\int f$ to denote the integral with respect to Lebesgue measure: $\int f\,dx$. When the integral is with respect to a different measure $\mu$, we write it as $\int f\, d\mu$. We use $X, Y$ for random variables and non-tilde $\mu$ and tilde $\Tilde\mu$ for probability distributions in the primal and dual spaces, respectively. When it is clear from context, we will sometimes abuse notation by identifying a measure with its density. We also use the same symbol $\CLSI$ to denote log-Sobolev constants for LSI, mirror LSI, and uniform-in-$N$ mirror LSI, etc. $\delta_x$ denotes the Dirac delta measure at $x\in\mathbb{R}^d$. For a measurable function $T:\mathcal{X}\to\mathcal{Y}$, we use $T_\sharp \mu$ to denote the pushforward of $\mu$ by $T$, e.g. $T_\sharp\mu(B) = \mu(T^{-1}(B))$ for any measurable set $B\subseteq \mathcal{Y}$. Finally, we use $\mathcal{P}_2(\mathcal{X})$ to denote the set of Borel probability measures over $\mathcal{X}$ with finite second moments, equipped with the 2-Wasserstein metric.

\section{Preliminaries}\label{sec:prelim}

In order to develop our main object of interest, the mirror mean-field Langevin dynamics, we first provide discussion of the mirror Langevin dynamics and mean-field Langevin dynamics as a primer.

\subsection{Mirror Langevin Dynamics}\label{sec:mirror_ld}
The mirror Langevin dynamics (MLD) has been extensively studied in the context of constrained sampling problems \citep{zhang2020wasserstein,jiang2021mirror,li2022mirror}. Suppose we want to sample from a probability distribution~$\mu$ supported on a convex set $\mathcal{X}\subseteq \mathbb{R}^d$. We assume $\mu$ is absolutely continuous w.r.t. Lebesgue measure on $\mathbb{R}^d$ and has density $\mu\propto e^{-f/\lambda}$ for some differentiable $f:\mathcal{X}\to\mathbb{R}$. 

Certainly, one way is to run the Langevin dynamics (or algorithm) \eqref{eq:langevin_dynamics} and project onto $\mathcal{X}$. However, one undesirable behavior of this projection step is that this will necessarily put positive mass onto $\partial\mathcal{X}$; see the experiments in Section \ref{sec:exp}. Alternatively, we can enforce the distribution to stay in $\mathcal{X}$ by changing the geometry of the underlying space. 

Towards that end, let $\phi:\mathcal{X}\to\mathbb{R}$ be a thrice-differentiable strictly convex function of Legendre type \citep{rockafellar1997convex}. We require $\|\nabla \phi(x)\|\to \infty$ and $\nabla^2\phi(x)\to\infty$ as $x$ approaches $\partial\mathcal{X}$, as this ensures the diffusion remains inside the domain $\mathcal{X}$. We call $\nabla\phi:\mathcal{X}\to\mathbb{R}^d$ the \emph{mirror map} and $\mathcal{Y}=\nabla\phi(\mathcal{X})=\mathbb{R}^d$ the \emph{dual space}. Further, define $\phi^*:\mathbb{R}^d\to\mathbb{R}$ to be the Legendre dual (convex conjugate) of $\phi$, given as $\phi^*(y) = \sup_{x\in\mathcal{X}}\langle x,y\rangle - \phi(x)$. In Appendix \ref{app:proofs}, we provide discussion on properties of $\phi$.

The mirror Langevin dynamics (MLD) in primal space satisfies the SDE \begin{equation}\label{eq:mld_primal}
\begin{cases}
    X_t &= \nabla\phi^*(Y_t)\\
    dY_t &= -\nabla f(X_t)dt + \sqrt{2\lambda\nabla^2\phi(X_t)}dB_t,
\end{cases}
\end{equation}
and its density $\mu_t = \operatorname{Law}(X_t)$ satisfies the Fokker-Planck PDE \[
\frac{\partial \mu_t}{\partial t} = \lambda \nabla \cdot \left(\mu_t [\nabla^2\phi]^{-1}\nabla \log \frac{\mu_t}{\mu}\right).
\]
This is equivalent to the Riemmanian Langevin dynamics in the primal space with metric given by the Hessian $\nabla^2\phi$, or in the dual space with metric $\nabla^2\phi^*$, as $(\mathcal{X},\nabla^2\phi)$ is isometric to $(\mathbb{R}^d,\nabla^2\phi^*)$. This ensures we obtain the same the convergence guarantees in both the primal and dual spaces.

We will require standard Lipschitz conditions on $f$ \citep{øksendal2010stochastic} and $\|[\nabla^2\phi(x)]^{1/2}-[\nabla^2\phi(x')]^{1/2}\|_\mathsf{F}\le O(\|x-x'\|_2)$ for every $x,x'\in \mathcal{X}$. This ensures well-posedness of \eqref{eq:mld_primal}; see Appendix A of 
\citet{zhang2020wasserstein} for more details.\footnote{In the sequel, we will also implicitly assume this Frobenius norm bound for all of our results on the mirror mean-field Langevin dynamics as well.}

Using It\^o's formula, we can obtain a representation of \eqref{eq:mld_primal} purely in terms of $X_t$; however, it depends on the third-derivative of $\phi$, see \citep[Exercise 10.1]{chewi2023logconcavesampling}. Alternatively, we can reformulate the MLD entirely in the dual space as the following SDE:
\begin{equation*}
dY_t = - \nabla f(\nabla\phi^*(Y_t))dt + \sqrt{2\lambda[\nabla^2\phi^*(Y_t)]^{-1}}dB_t.
\end{equation*}

The distribution $\Tilde{\mu}_t = \operatorname{Law}(Y_t)$ is the pushforward of $\mu_t$ under the mirror map: $\Tilde{\mu}_t = (\nabla\phi)_\sharp \mu_t$.

To initiate the isoperimetric analysis, we first introduce the entropy, Kullback-Leibler (KL) divergence, and relative Fisher information. Denote the entropy of a nonnegative functional $f\ge 0$ as $\mathsf{Ent}_\mu(f) :=\mathbb{E}_\mu[f\log f] - \mathbb{E}_{\mu}[f^2]\log\mathbb{E}_\mu[f^2]$, 
KL divergence between two measures $\mu,\nu$ as $\KL(\mu\sep\nu) := \mathbb{E}_\nu\left[\frac{\mu}{\nu}\log\frac{\mu}{\nu}\right] = \int \mu\log \frac{\mu}{\nu}$, and the relative Fisher information (modified by the mirror map) as $
\FI_\phi(\mu\sep\nu) := \mathbb{E}_\mu\left\langle\nabla \log \frac{\mu}{\nu},[\nabla^2(\phi)]^{-1}\nabla \log \frac{\mu}{\nu}\right\rangle$.\footnote{In the Euclidean setting, we drop the dependence on the mirror map and use $\FI(\mu\sep\nu)$.}

We require the stationary distribution $\mu_*$ to satisfy a mirror log-Sobolev inequality:
\begin{assumption}[Mirror log-Sobolev inequality]\label{asmp:lsi}
    There exists a constant $\CLSI> 0$ such that $\mu_*$ satisfies a mirror log-Sobolev inequality with constant $\CLSI$, that is, for any smooth function $g:\mathbb{R}^d\to\mathbb{R}$, we have
    \begin{align*}
    \mathsf{Ent}_{\mu_*}(g^2) \le \frac{2}{\CLSI}\mathbb{E}_{\mu_*}\left[\|\nabla g(x)\|^2_{[\nabla^2\phi(x)]^{-1}}\right].
    \end{align*}
    Equivalently, by setting 
    $g := \sqrt{\frac{d\mu}{d\mu_*}}$, for every $\mu \in \mathcal{P}_2(\mathcal{X})$, it holds that $\KL(\mu\sep\mu_*) \le \frac{1}{2\CLSI}\FI_\phi(\mu\sep\mu_*)$.
\end{assumption}

A simple method to verify the mirror LSI is as follows: the mirror LSI is satisfied with constant $C_0/\alpha$ if the classical LSI is satisfied with constant $C_0$ and $\phi$ is $\alpha$-strongly convex \citep{Daaloul25}. We provide the proof, an explicit family of distributions satisfying the mirror LSI, and some properties of the mirror LSI in Appendix \ref{app:mlsi}. Under this assumption, we have the following convergence guarantee.
\begin{theorem}\label{thm:mld_convergence}
Let $\{\mu_t\}_{t\ge 0}$ denote the evolution of \eqref{eq:mld_primal} and $\mu_*$ denote its stationary distribution. Under Assumption \ref{asmp:lsi}, for $t\ge 0$, it holds that \[
\KL(\mu_t\sep\mu_*)\le e^{-2\CLSI\lambda t} {\KL(\mu_0\sep\mu_*)}.\]
\end{theorem}
A proof is given in \citet{jiang2021mirror}, but for completeness, we provide a short proof in Appendix \ref{sec:b1}. 

\subsection{Mean-Field Langevin Dynamics}\label{sec:mfld}
Suppose we want to minimize the following entropy-regularized functional: \begin{equation}\label{eq:mfld_L}
\mathcal{L}(\mu) := F(\mu) + \lambda{\mathsf{Ent}}(\mu),
\end{equation}
where $F: \mathcal{P}_2(\mathbb{R}^d)\to\mathbb{R}$ is a linearly convex functional (Definition \ref{def:linear_convex}), $\mathsf{Ent}(\mu):= \int \log\frac{d\mu}{dx}d\mu$ is the entropy of $\mu$, and $\lambda>0$ is the temperature parameter. We start by defining the first variation of $F$.

\begin{definition}[First variation]
We say $F:\mathcal{P}_2(\mathbb{R}^d)\to\mathbb{R}$ admits a first variation at $\mu\in \mathcal{P}_2(\mathbb{R}^d)$ if there exists a continuous function $\frac{\delta F(\mu)}{\delta\mu}:\mathbb{R}^d\to\mathbb{R}$ such that for any $\mu,\mu'\in\mathcal{P}_2(\mathbb{R}^d)$, we have 
    \begin{equation}
        \frac{dF(\mu + \epsilon(\mu' - \mu))}{d\epsilon}\bigg|_{\epsilon=0} = \int \frac{\delta F(\mu)}{\delta\mu}(\mu'-\mu)dx. \label{eq:first_variation}
    \end{equation}
    If the first variation of $F$ exists, it is unique up to a constant.
\end{definition}
To solve \eqref{eq:mfld_L}, we consider the mean-field Langevin dynamics (MFLD): \begin{equation}\label{eq:mfld_sde}
    dX_t = -\nabla \frac{\delta F(\mu_t)}{\delta\mu}(X_t)dt + \sqrt{2\lambda}dB_t,
\end{equation}
a McKean-Vlasov process where $\mu_t = \operatorname{Law}(X_t)$ and $B_t$ is the standard Brownian motion in $\mathbb{R}^d$. It follows that $\mu_t$ solves the nonlinear Fokker-Planck PDE 
\begin{equation}\label{eq:mfld_pde}
    \frac{\partial\mu_t}{\partial t} = \nabla \cdot \left(\mu_t \nabla \frac{\delta F(\mu_t)}{\delta \mu}\right) + \lambda\Delta \mu_t = \lambda \nabla \cdot\left(\mu_t \nabla\log \frac{\mu_t}{\hat{\mu}_t}\right),
\end{equation}
where $\hat{\mu}_t$ is the proximal Gibbs distribution associated to $\mu_t$ (see Definition \ref{def:prox_gibbs} below).
To ensure existence and convergence, the following assumptions are required.
\begin{assumption}[Lipschitz and smoothness]\label{asmp:smoothness}
 There exist constants $M_1,M_2>0$ such that for any $\mu,\mu'\in\mathcal{P}_2(\mathbb{R}^d)$, $x, x'\in\mathbb{R}^d$, it holds that $\left\|\nabla \frac{\delta F(\mu)}{\delta\mu}(x)\right\|_2\le M_1$ and
 \begin{align*}
     &\left\|\nabla \frac{\delta F(\mu)}{\delta \mu}(x) - \nabla \frac{\delta F(\mu')}{\delta\mu}(x')\right\|_2\le M_2(W_2(\mu,\mu') + \|x-x'\|_2).
 \end{align*}
 \end{assumption}
We remark that the first part of the assumption is only required for the discretization analysis. Here $W_2(\mu,\mu')$ is the $2$-Wasserstein distance between measures $\mu,\mu'\in\mathcal{P}_2(\mathcal{X})$, \[
W_2^2(\mu,\mu') := \inf_{\pi\in\Pi(\mu,\mu')}\int_{\mathcal{X}\times\mathcal{X}}\|x-x'\|^2d\pi(x,x'),\] where $\Pi(\mu,\mu') := \{\pi\in\mathcal{P}(\mathcal{X}\times\mathcal{X})\mid (P_x)_\sharp \pi = \mu,(P_y)_\sharp\pi = \mu')\}$ is the set of transport plans, and $P_x(x,y):= x$, $P_y(x,y) := y$ are the projections onto the first and second coordinates, respectively. 

\begin{definition}\label{def:linear_convex}
We say a functional $F:\mathcal{P}_2(\mathbb{R}^d)\to\mathbb{R}$ is linearly convex if for every $\mu,\nu\in\mathcal{P}_2(\mathbb{R}^d)$ and $\alpha\in[0,1]$, it holds that $F(\alpha\mu + (1-\alpha)\nu) \le \alpha F(\mu) + (1-\alpha)F(\nu)$. 
\end{definition}

\begin{assumption}\label{asmp:minimizer}
$F$ is linearly convex and \eqref{eq:mfld_L} admits a minimizer $\mu_*$.
\end{assumption}
For example, this holds if $F$ is of the form $\sum_i \ell_i\left(\int f_i d\mu\right)$ for convex losses $\ell_i$. 
Then the following can be shown:
\begin{theorem}
    Under Assumptions \ref{asmp:smoothness} and \ref{asmp:minimizer}, the minimizer $\mu_*$ of \eqref{eq:mfld_L} is unique and its density satisfies $\mu_* \propto \exp\left(-\frac{1}{\lambda}\frac{\delta F(\mu_*)}{\delta\mu}\right)$.
\end{theorem}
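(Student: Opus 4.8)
The plan is to split the claim into \emph{uniqueness}, which follows from strict convexity of $\mathcal{L}$, and the \emph{fixed-point form} of $\mu_*$, which follows from a first-order optimality computation along linear interpolations. For uniqueness, I would note that $F$ is linearly convex by Assumption~\ref{asmp:minimizer} and that $\operatorname{Ent}$ is \emph{strictly} linearly convex: since $t\mapsto t\log t$ is strictly convex on $[0,\infty)$, for distinct $\mu,\nu$ and $\alpha\in(0,1)$ the integrand of $\operatorname{Ent}(\alpha\mu+(1-\alpha)\nu)$ lies pointwise strictly below $\alpha\,\mu\log\mu+(1-\alpha)\,\nu\log\nu$ on the positive-measure set $\{\mu\neq\nu\}$. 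Because $\mathcal{L}(\mu_*)<\infty$ by Assumption~\ref{asmp:minimizer}, $F$ is real-valued, and $\operatorname{Ent}(\mu_*)>-\infty$ for $\mu_*\in\mathcal{P}_2(\mathbb{R}^d)$ (by comparison with a Gaussian), both terms are finite at any minimizer, so two distinct minimizers would have a midpoint with strictly smaller $\mathcal{L}$ --- a contradiction.

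For the form, fix any $\nu\in\mathcal{P}_2(\mathbb{R}^d)$ with a density and $\operatorname{Ent}(\nu)<\infty$, and set $\mu_\epsilon:=(1-\epsilon)\mu_*+\epsilon\nu$. Convexity of $\mathcal{L}$ along this segment makes $\epsilon\mapsto\epsilon^{-1}(\mathcal{L}(\mu_\epsilon)-\mathcal{L}(\mu_*))$ nondecreasing on $(0,1]$, so its limit as $\epsilon\downarrow 0$ exists in $[-\infty,\infty)$ and is $\geq 0$ since $\mu_*$ minimizes $\mathcal{L}$. I would evaluate this right-derivative termwise: the $F$-part gives $\int\frac{\delta F(\mu_*)}{\delta\mu}\,d(\nu-\mu_*)$, which is finite because Assumption~\ref{asmp:smoothness} makes $\frac{\delta F(\mu_*)}{\delta\mu}$ Lipschitz, hence of linear growth and integrable against $\mathcal{P}_2$ measures; the entropy part gives $\int(\log\mu_*+1)\,d(\nu-\mu_*)$ by monotone convergence applied to the pointwise-nonincreasing difference quotients of $t\mapsto t\log t$, with the $\epsilon=1$ value $\operatorname{Ent}(\nu)-\operatorname{Ent}(\mu_*)<\infty$ supplying the domination. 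Since $\int d(\nu-\mu_*)=0$ the constant $1$ drops out, and with $g:=\frac{\delta F(\mu_*)}{\delta\mu}+\lambda\log\mu_*$ we get $\int g\,d\nu\geq\int g\,d\mu_*$ for every admissible $\nu$.

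To turn this into the stated form, I would test the inequality with $\nu=\mu_*(\,\cdot\,\cap A)/\mu_*(A)$ for an arbitrary Borel set $A$ with $\mu_*(A)>0$ (admissible, as its entropy is controlled by $\operatorname{Ent}(\mu_*)$), obtaining $\int_A(g-\bar g)\,d\mu_*\geq 0$ for all such $A$, where $\bar g:=\int g\,d\mu_*$; taking $A=\{g<\bar g\}$ forces $\mu_*(\{g<\bar g\})=0$, and combined with $\int(g-\bar g)\,d\mu_*=0$ this gives $g\equiv\bar g$ on $\operatorname{supp}\mu_*$, i.e.\ $\mu_*\propto\exp(-\tfrac1\lambda\tfrac{\delta F(\mu_*)}{\delta\mu})$ on its support. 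Finally, to see $\operatorname{supp}\mu_*=\mathbb{R}^d$, I would suppose $\{\mu_*=0\}$ had positive Lebesgue measure and test the optimality inequality with $\nu$ uniform on a bounded subset of it: then $\int\log\mu_*\,d\nu=-\infty$ while every other term stays finite, so the right-derivative would be $-\infty<0$, contradicting optimality; hence the proportionality is global.

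The step I expect to be the main obstacle is the measure-theoretic bookkeeping in the Euler--Lagrange computation: justifying the interchange of limit and integral for the entropy difference quotients when the limiting integrand may equal $-\infty$, checking that every test perturbation $\nu$ used has finite entropy and finite second moment, and ensuring $F(\mu_*)$ and $\operatorname{Ent}(\mu_*)$ are both finite so that $\mathcal{L}=F+\lambda\operatorname{Ent}$ is never of the form $\infty-\infty$. None of these is deep --- each rests on a standard comparison estimate (the Gaussian lower bound for entropy on $\mathcal{P}_2$, the linear growth of a Lipschitz first variation, monotonicity of convex difference quotients) --- and the uniqueness half is routine once strict convexity of $\operatorname{Ent}$ along interpolations is in hand.
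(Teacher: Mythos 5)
Your argument is correct and follows the same route the paper takes (and, for the analogous constrained statement, sketches in Appendix \ref{app:proofunique}): uniqueness from strict linear convexity of the entropy combined with linear convexity of $F$, and the Gibbs form from the first-order optimality condition stating that $\frac{\delta F(\mu_*)}{\delta\mu}+\lambda\log\mu_*$ is constant $\mu_*$-a.e., with full support forced by the entropy term. The paper defers the measure-theoretic details to prior work, whereas you supply them explicitly (monotone difference quotients for the entropy, restriction test measures, and the boundedness of $\nabla\frac{\delta F(\mu)}{\delta\mu}$ from Assumption \ref{asmp:smoothness} to make the $F$-term finite); these additions are sound and do not change the underlying approach.
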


The convergence analysis of the MFLD relies on the proximal Gibbs distribution \citep{nitanda2022convex,chizat2022meanfield}.
\begin{definition}[Proximal Gibbs distribution]\label{def:prox_gibbs}
For each $\mu\in\mathcal{P}_2(\mathbb{R}^d)$, we define $\hat{\mu}$ to be the Gibbs distribution such that $\hat{\mu} \propto \exp\left(-\frac{1}{\lambda}\frac{\delta F(\mu)}{\delta\mu}\right)$.
\end{definition}
Note that for the \emph{linear} Langevin dynamics, the proximal Gibbs distribution coincides with the stationary distribution $\mu_*$. Next, we introduce the \emph{uniform} log-Sobolev inequality. 

\begin{assumption}[Uniform LSI]\label{asmp:ULSI}
    Suppose there exists a constant $\CLSI > 0$ such that  for every $\mu\in\mathcal{P}_2(\mathbb{R}^d)$, the proximal Gibbs distribution $\hat{\mu}$ satisfies a log-Sobolev inequality with constant $\CLSI$, that is, \[\KL(\mu\sep\hat{\mu})\le \frac{1}{2\CLSI}\FI(\mu\sep\hat{\mu}).\]
\end{assumption}
In the mean-field setting, this assumption is generally verified via the Holley-Stroock perturbation technique \citep{holleystroock} from the regularization term $\lambda$; see \citet{suzuki2023meanfield} for more discussion. \citet{nitanda2022convex,chizat2022meanfield} use this key ingredient, along with the entropy sandwich inequality (Lemma \ref{lem:entropy_sandwich}), to prove the convergence of the MFLD.
\begin{theorem}
    Let $(\mu_t)_{t\ge0}$ be the evolution described by \eqref{eq:mfld_pde}. Under Assumptions \ref{asmp:smoothness}-\ref{asmp:ULSI}, it holds that
    \begin{align*}
    \mathcal{L}(\mu_t) - \mathcal{L}(\mu_*) \le e^{-2\CLSI \lambda t}(\mathcal{L}(\mu_0) - \mathcal{L}(\mu_*)).
    \end{align*}
\end{theorem}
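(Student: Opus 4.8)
The plan is to run a Grönwall argument on the suboptimality gap $\mathcal{E}_t := \mathcal{L}(\mu_t) - \mathcal{L}(\mu_*)$. For this I need two ingredients: (a) an exact dissipation identity expressing $\frac{d}{dt}\mathcal{L}(\mu_t)$ as a relative Fisher information, and (b) the lower half of the entropy sandwich (Lemma~\ref{lem:entropy_sandwich}), which bounds $\mathcal{E}_t$ above by a KL term. The uniform LSI (Assumption~\ref{asmp:ULSI}) then converts the Fisher information in (a) into the KL term in (b), closing the loop into a linear differential inequality.

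First I would compute the dissipation. Writing $\hat\mu_t$ for the proximal Gibbs distribution of $\mu_t$, the defining relation $\hat\mu_t \propto \exp(-\tfrac1\lambda \frac{\delta F(\mu_t)}{\delta\mu})$ gives $\frac{\delta F(\mu_t)}{\delta\mu} = -\lambda\log\hat\mu_t + c_t$ for a constant $c_t$, so the first variation of $\mathcal{L}$ is $\frac{\delta\mathcal{L}(\mu_t)}{\delta\mu} = \frac{\delta F(\mu_t)}{\delta\mu} + \lambda(\log\mu_t+1) = \lambda\log\frac{\mu_t}{\hat\mu_t} + \text{const}$. Plugging this into the chain rule $\frac{d}{dt}\mathcal{L}(\mu_t) = \int \frac{\delta\mathcal{L}(\mu_t)}{\delta\mu}\,\partial_t\mu_t\,dx$, using the Fokker--Planck form $\partial_t\mu_t = \lambda\nabla\cdot(\mu_t\nabla\log\frac{\mu_t}{\hat\mu_t})$ from \eqref{eq:mfld_pde}, and integrating by parts (the additive constant integrates to $0$ since $\int\partial_t\mu_t=0$), I obtain
\begin{equation*}
\frac{d}{dt}\mathcal{L}(\mu_t) = -\lambda^2\int \mu_t\Bigl\|\nabla\log\tfrac{\mu_t}{\hat\mu_t}\Bigr\|^2 dx = -\lambda^2\operatorname{FI}(\mu_t\sep\hat\mu_t).
\end{equation*}

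Next, for the sandwich bound, I would use the Gibbs variational principle: $\hat\mu$ minimizes $\nu\mapsto\int\frac{\delta F(\mu)}{\delta\mu}d\nu + \lambda\operatorname{Ent}(\nu)$ over $\mathcal{P}_2(\mathbb{R}^d)$ with value $-\lambda\log Z_\mu$ (where $Z_\mu$ normalizes $\hat\mu$), since the gap equals $\lambda\operatorname{KL}(\nu\sep\hat\mu)\ge0$. Evaluating at $\nu=\mu_*$ and rearranging the identity $\mathcal{L}(\mu)-\lambda\operatorname{KL}(\mu\sep\hat\mu) = F(\mu) - \int\frac{\delta F(\mu)}{\delta\mu}d\mu - \lambda\log Z_\mu$ gives $\mathcal{L}(\mu) - \lambda\operatorname{KL}(\mu\sep\hat\mu) \le F(\mu) + \int\frac{\delta F(\mu)}{\delta\mu}\,d(\mu_*-\mu) + \lambda\operatorname{Ent}(\mu_*)$; linear convexity of $F$ (Assumption~\ref{asmp:minimizer}, via $F(\mu_*)\ge F(\mu)+\int\frac{\delta F(\mu)}{\delta\mu}d(\mu_*-\mu)$) bounds the right side by $\mathcal{L}(\mu_*)$, i.e. $\mathcal{E}_t \le \lambda\operatorname{KL}(\mu_t\sep\hat\mu_t)$. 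Combining with the dissipation identity and Assumption~\ref{asmp:ULSI} ($\operatorname{FI}(\mu_t\sep\hat\mu_t)\ge 2\CLSI\operatorname{KL}(\mu_t\sep\hat\mu_t)$) yields $\frac{d}{dt}\mathcal{E}_t \le -2\CLSI\lambda^2\operatorname{KL}(\mu_t\sep\hat\mu_t) \le -2\CLSI\lambda\,\mathcal{E}_t$, and Grönwall's inequality gives $\mathcal{E}_t \le e^{-2\CLSI\lambda t}\mathcal{E}_0$.

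The main obstacle is not any single inequality but the rigor of the formal manipulations: justifying differentiation under the integral in the chain rule, the integration by parts (no boundary contributions at infinity, finiteness of $\operatorname{FI}(\mu_t\sep\hat\mu_t)$ for a.e.\ $t$), and the requisite regularity/well-posedness of the McKean--Vlasov flow $\mu_t$ solving \eqref{eq:mfld_pde} — all of which lean on the Lipschitz/smoothness hypothesis (Assumption~\ref{asmp:smoothness}) and standard parabolic regularity, and are handled in \cite{nitanda2022convex,chizat2022meanfield}. A secondary point is that $\mu_*$ is the \emph{unique} minimizer with $\mu_*\propto\exp(-\tfrac1\lambda\frac{\delta F(\mu_*)}{\delta\mu})$, i.e.\ $\widehat{\mu_*}=\mu_*$, which is exactly the preceding theorem in the excerpt and is what makes the sandwich argument land on $\mathcal{L}(\mu_*)$.
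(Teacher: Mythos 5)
Your proof is correct and follows essentially the same route the paper takes (and that its cited sources \cite{nitanda2022convex,chizat2022meanfield} and the paper's own Appendix~\ref{app:proofconti} use for the mirror analogue): the Fokker--Planck dissipation identity $\frac{d}{dt}\mathcal{L}(\mu_t) = -\lambda^2\operatorname{FI}(\mu_t\sep\hat\mu_t)$, the upper half of the entropy sandwich $\mathcal{L}(\mu_t)-\mathcal{L}(\mu_*)\le\lambda\operatorname{KL}(\mu_t\sep\hat\mu_t)$, the uniform LSI, and Gr\"onwall. Your derivation of the sandwich bound via the Gibbs variational principle and linear convexity is the standard argument behind Lemma~\ref{lem:entropy_sandwich}(iii), so no gap remains.
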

The discretization of MFLD has also been studied in \citet{nitanda2022convex,suzuki2023meanfield,nitanda2024improved}; we refer to these works for a detailed analysis.

\section{Mirror Mean-Field Langevin Dynamics}\label{sec:mirror_mfld}
Now with the necessary technical background, we introduce our main object of study. Suppose we want to minimize \eqref{eq:mfld_L} subject to the additional constraint that $\mu\in\mathcal{P}_2(\mathcal{X})$ for a convex set $\mathcal{X}\subseteq \mathbb{R}^d$:
\begin{equation}\label{eq:mmfld_L}
    \underset{\mu\in\mathcal{P}_2(\mathcal{X})}{\arg\min}\,\mathcal{L}(\mu) := F(\mu) + \lambda {\mathsf{Ent}(\mu)},
\end{equation}
where the domain of $F$ and its first-variation are accordingly modified from \eqref{eq:mfld_L}, e.g., now we have $F:\mathcal{P}_2(\mathcal{X})\to\mathbb{R}$, and its first variation at $\mu\in\mathcal{P}_2(\mathcal{X})$ is a function $\frac{\delta F(\mu)}{\delta\mu}:\mathcal{X}\to\mathbb{R}$. As we remarked in the introduction, restriction to a convex domain is a natural and frequently studied constraint in various optimization problems \citep{chizat2022trajectoryinferencemeanfieldlangevin,chizat2023doublyregularizedentropicwasserstein,vaskevicius2023barycenter,gu2025partially}.

For this problem, we introduce the \emph{mirror mean-field Langevin dynamics} (MMFLD), which is defined as the following (primal) SDE over $\mathcal{X}$:
\begin{equation}\label{eq:mmfld_sde}
\begin{cases}
    X_t &= \nabla\phi^*(Y_t)\\
    dY_t &= -\nabla\frac{\delta F(\mu_t)}{\delta\mu}(X_t)dt + \sqrt{2\lambda\nabla^2\phi(X_t)}dB_t,
\end{cases}
\end{equation}
where $\mu_t = \operatorname{Law}(X_t)$. Similar to the mirror Langevin and mean-field Langevin settings, it is easy to check that \eqref{eq:mmfld_pde} corresponds to the following Fokker-Planck PDE: 
\begin{equation}
\frac{\partial}{\partial t}\mu_t = \lambda \nabla \cdot \left(\mu_t [\nabla^2\phi]^{-1}\nabla \log \frac{\mu_t}{\hat{\mu}_t}\right).\label{eq:mmfld_pde}   
\end{equation}

Next, we introduce the following notions of local and dual norms, which allows us to define the notion of relative Lipschitz and smoothness.
\begin{definition}[Local and dual norms]
    Given a $C^2$ strictly convex function $\phi$, the local norm at $x\in \operatorname{int}\operatorname{dom}\phi$ with respect to $\phi$ is defined as $\|u\|_{\nabla^2\phi(x)}=\langle u, \nabla^2\phi(x)u\rangle^{1/2}$. The local dual norm is similarly defined as $\|u\|_{[\nabla^2\phi(x)]^{-1}} = \langle u, [\nabla^2\phi(x)]^{-1}u\rangle^{1/2}$.
\end{definition}

\begin{assumption}[Relative Lipschitz and smoothness]\label{asmp:rel_lip_and_smooth}
There exists constants $M_1, M_2>0$ such that for any $\mu,\mu'\in\mathcal{P}_2(\mathcal{X})$, $x,x'\in\mathcal{X}$, $\frac{\delta F(\mu)}{\delta\mu}$ is differentiable with $\left\|\nabla \frac{\delta F(\mu)}{\delta\mu}(x)\right\|_{[\nabla^2\phi(x)]^{-1}}\le M_1$ and 
    \begin{align*}
    &\left\|\nabla \frac{\delta F(\mu)}{\delta\mu}(x) - \nabla\frac{\delta F(\mu')}{\delta\mu}(x')\right\|_{[\nabla^2\phi(x')]^{-1}}\le M_2(\widetilde{W}_{2,\phi}(\mu,\mu') +\|\nabla\phi(x)-\nabla\phi(x')\|_{[\nabla^2\phi(x')]^{-1}}).
    \end{align*}
\end{assumption}
Here, the squared divergence $\widetilde{W}_{2,\phi}^2(\mu,\mu')$ is defined as
\begin{align*}
\inf_{\pi\in \Pi(\mu,\mu')} \int \|\nabla \phi(x)-\nabla\phi(x')\|_{[\nabla^2\phi(x')]^{-1}}^2d\pi.
\end{align*}

This assumption parallels Assumption \ref{asmp:smoothness} in the standard Euclidean setting and ensures the existence of the minimizer.\footnote{Note that $\widetilde{W}_{2,\phi}$ is not a metric, as it is not symmetric. However, its integrand is the second-order Taylor expansion of the dual Bregman divergence around $\nabla\phi(x')$.} Note also that these correspond to the mean-field generalizations of Assumptions 3, 6 from \citet{jiang2021mirror}. For instance, \citet[Section E.2]{ahn2021efficient} show that the quadratic potential on the simplex and $\phi(x) = -\sum_{i}\log x_i$ satisfies this condition in the linear setting, which can be straightforwardly generalized to the mean-field setting.

It is then straightforward to see that \eqref{eq:mmfld_L} admits a unique minimizer; see Appendix \ref{app:proofunique} for a proof.
\begin{theorem}\label{thm:unique}
Under Assumptions \ref{asmp:minimizer} and \ref{asmp:rel_lip_and_smooth}, \eqref{eq:mmfld_L} is well-posed and admits a unique minimizer $\mu_*$ which satisfies $\mu_* \propto \exp\left(-\frac{1}{\lambda}\frac{\delta F(\mu_*)}{\delta\mu}\right)$.
\end{theorem}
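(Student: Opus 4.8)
The plan is to mirror the unconstrained argument (Section \ref{sec:mfld}) but work in the Hessian geometry induced by $\phi$. The first step is to establish well-posedness: under Assumption \ref{asmp:rel_lip_and_smooth} the dual drift $y\mapsto\nabla\frac{\delta F(\mu)}{\delta\mu}(\nabla\phi^*(y))$ and diffusion coefficient $\sqrt{2\lambda\nabla^2\phi(\nabla\phi^*(y))}=\sqrt{2\lambda[\nabla^2\phi^*(y)]^{-1}}$ are Lipschitz (using the Frobenius-norm bound on $[\nabla^2\phi]^{1/2}$ assumed in the footnote, together with the relative Lipschitz hypothesis), so the dual SDE $dY_t=-\nabla\frac{\delta F(\mu_t)}{\delta\mu}(X_t)\,dt+\sqrt{2\lambda[\nabla^2\phi^*(Y_t)]^{-1}}\,dB_t$ has a unique strong solution by the standard McKean--Vlasov existence theory (e.g.\ Sznitman); pushing forward under $\nabla\phi^*$ gives $\mu_t=\operatorname{Law}(X_t)\in\mathcal P_2(\mathcal X)$, and the barrier conditions $\|\nabla\phi(x)\|\to\infty$, $\nabla^2\phi(x)\to\infty$ at $\partial\mathcal X$ keep the trajectory in $\operatorname{int}\mathcal X$.

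For the variational characterization, I would use linear convexity of $F$ (Definition \ref{def:linear_convex}) and strict convexity of $\operatorname{Ent}$ to conclude that $\mathcal L$ is strictly convex along linear interpolations on $\mathcal P_2(\mathcal X)$, hence any minimizer is unique provided one exists; existence is granted by Assumption \ref{asmp:minimizer} (after checking that restricting the domain to $\mathcal P_2(\mathcal X)$ does not destroy it, which is immediate since $\mathcal X$ is convex and closed under the relevant limits). Then I would write the first-order optimality condition: for any $\mu'\in\mathcal P_2(\mathcal X)$ with $\operatorname{Ent}(\mu')<\infty$, the directional derivative $\frac{d}{d\epsilon}\mathcal L(\mu_*+\epsilon(\mu'-\mu_*))|_{\epsilon=0^+}\ge0$ expands to $\int\bigl(\frac{\delta F(\mu_*)}{\delta\mu}+\lambda\log\mu_*\bigr)(d\mu'-d\mu_*)\ge0$. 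Since this holds for all admissible $\mu'$ and the feasible set contains a neighborhood of $\mu_*$ in the appropriate sense, the bracket must be constant $\mu_*$-a.e.\ on $\operatorname{supp}\mu_*$, which rearranges to $\mu_*\propto\exp\bigl(-\frac1\lambda\frac{\delta F(\mu_*)}{\delta\mu}\bigr)$; one then checks this density is normalizable on $\mathcal X$ using the $M_1$ bound from Assumption \ref{asmp:rel_lip_and_smooth}(i), and that it indeed lies in $\mathcal P_2(\mathcal X)$. Conversely any fixed point of $\mu\mapsto\hat\mu$ is a critical point and, by convexity, the global minimizer.

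The main obstacle is the handling of the boundary: the entropy term $\lambda\operatorname{Ent}(\mu)=\lambda\int\mu\log\mu$ has derivative $\lambda(\log\mu_*+1)$ which blows up if $\mu_*\to0$ near $\partial\mathcal X$, so one must justify the first-variation computation and the ``bracket is constant'' step carefully — this is exactly where the standard unconstrained proof uses that $-\nabla\log$ of a Gibbs density is integrable, and here the barrier growth of $\phi$ (and hence the decay of $\hat\mu$) must be reconciled with the finiteness of $\operatorname{Ent}$. I would address this by restricting test perturbations $\mu'$ to have densities bounded above and below on compact subsets of $\operatorname{int}\mathcal X$, derive the optimality condition on each such compact set, and then exhaust $\operatorname{int}\mathcal X$; uniqueness and the Gibbs form follow as above, and the fact that $\mu_*$ is supported on all of $\mathcal X$ (no mass on $\partial\mathcal X$) is a consequence of the explicit exponential form together with integrability. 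The remaining steps — verifying $\mathcal L(\mu_*)>-\infty$, measurability, and that $W_{2,\phi}$ is finite on the relevant class — are routine given Assumption \ref{asmp:rel_lip_and_smooth}.
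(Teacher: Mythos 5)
Your proposal is correct and follows essentially the same route as the paper's proof: well-posedness via standard McKean--Vlasov SDE theory in the dual space transferred to the primal space through the mirror map (the paper cites a reference for this), uniqueness from strict convexity of the entropy given the minimizer guaranteed by Assumption \ref{asmp:minimizer}, and the Gibbs form from the first-order optimality condition forcing $\frac{\delta F(\mu_*)}{\delta\mu}+\lambda\log\mu_*$ to be constant $\mu_*$-a.e. Your additional care about the boundary behavior and the compact-exhaustion justification of the first-variation computation fills in details the paper delegates to cited references, but does not change the argument.
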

Recall that convergence analysis of the mean-field Langevin dynamics utilizes the entropy sandwich inequality \citep{nitanda2022convex,chizat2022meanfield}. We will also require this result. Remarkably, the statement will hold exactly in the constrained setting as well; see Lemma \ref{lem:entropy_sandwich}.

Now we can prove the convergence of the mirror mean-field Langevin dynamics in continuous time. The proof, which mirrors that of ordinary MFLD \citep{nitanda2022convex}, is given in Appendix \ref{app:proofconti}.
\begin{theorem}\label{thm:mmfld_conti}
    Let $\{\mu_t\}_{t\ge0}$ be the evolution described by \eqref{eq:mmfld_pde}. Under Assumptions \ref{asmp:minimizer}-- \ref{asmp:rel_lip_and_smooth},\footnote{We remark that Assumption \ref{asmp:ULSI} should here be the uniform \emph{mirror} LSI, but omit the full definition for space.} for all $t\ge 0$, it holds that \[
    \mathcal{L}(\mu_t) - \mathcal{L}(\mu_*) \le e^{-2\CLSI \lambda t}(\mathcal{L}(\mu_0) - \mathcal{L}(\mu_*)).
    \]
\end{theorem}

\section{Discretization Analysis for MMFLD}\label{sec:discretized_analysis}

\subsection{Obtaining the Discretized MMFLD}\label{sec:discrete}

We now derive the time- and space- (i.e. particle-) discretized version of MMFLD by motivating the empirical dynamics from an alternative objective. Indeed, the KL term in \eqref{eq:mfld_L} no longer makes sense in the finite-particle setting, as the negative entropy is not well-defined for discrete measures. Instead, we study the empirical system by lifting to the configuration space. Let $\mu^{(N)}\in\mathcal{P}^{(N)}$ be a distribution of $N$ particles $\mathbf{X} = (X^i)_{i=1}^N\in \mathcal{X}^N$, where $\mathcal{P}^{(N)}$ is the space of probability measures on $(\mathcal{X}^N, \mathcal{B}(\mathcal{X})^{\otimes N})$. We introduce the following objective on $\mathcal{P}^{(N)}$: \begin{equation}\label{eq:L_N}
    \mathcal{L}^{(N)}(\mu^{(N)}) := N \underset{\mathbf{X}\sim \mu^{(N)}}{\mathbb{E}}[F(\mu_\mathbf{X})]+\lambda{\mathsf{Ent}(\mu^{(N)}}).
\end{equation}
We can easily verify that if $\mu^{(N)}$ is the $N$-fold product measure of $\mu\in \mathcal{P}_2(\mathcal{X})$, then $\mathcal{L}^{(N)}(\mu^{(N)})\ge N\mathcal{L}(\mu)$ by the convexity of $F$ \citep{suzuki2023meanfield,nitanda2024improved}, and moreover the optimum of $\mathcal{L}^{(N)}$ is given by the following Gibbs distribution  $\mu_*^{(N)}$:
\begin{equation*}
    \frac{d\mu_*^{(N)}}{d\mathbf{x}}(\mathbf{x})\propto \exp\left(-\frac{N}{\lambda}F(\mathbf{x})\right).
\end{equation*}

To solve \eqref{eq:L_N}, consider the  finite-particle approximation of \eqref{eq:mmfld_sde}, which is described by the system of SDEs $\{\mathbf{X}_t\}_{t\ge0} = \{(X_t^1,\dots, X_t^N)\}_{t\ge0}$: \begin{equation}
dY_{t}^i =  -\nabla\frac{\delta F(\mu_t)}{\delta\mu}(X_t^i)dt + \sqrt{2\lambda\nabla^2\phi(X_t^i)}dB_t^i\label{eq:system}    
\end{equation}
with 
$X_{t}^i = \nabla\phi^*(Y_{t}^i)$. We sometimes denote $F(\mathbf{x}) = F(\mu_\mathbf{X})$ when emphasizing $F$ as a function of $\mathbf{x}$. Noticing that $N\nabla_{x^i}F(\mathbf{X}_t) = \nabla \frac{\delta F(\mu_\mathbf{X})}{\delta\mu}(x^i)$ \citep{chizat2022meanfield}, we can identify the system \eqref{eq:system} with
\begin{align*}
    \begin{cases}
    \mathbf{X}_t &= \nabla\phi^*(\mathbf{Y}_t),\\
        d\mathbf{Y}_t &= -N\nabla F(\mathbf{X}_t)dt + \sqrt{2\lambda[\nabla^2\phi^*(\mathbf{X}_t)]^{-1}}d\mathbf{B}_t,
    \end{cases}
\end{align*}
where $\mathbf{B}_t$ is the standard Brownian motion on $\mathbb{R}^{dN}$, for sampling from the Gibbs distribution $\mu_*^{(N)}$.

For time discretization of \eqref{eq:system}, we use the sequence of learning rates $\{\eta_k\}_{k\in\mathbb{N}}$ at each iteration and implement the forward discretization scheme from \citet{ahn2021efficient}. Let $\mathbf{X}_k= (X_k^i)_{i=1}^N\subseteq\mathcal{X}$ denote the $N$ particles at the $k$th update, and define $\mu_k = \mu_{\mathbf{X}_k}$ as the corresponding empirical distribution. Starting from $X_0^i\sim \mu_0$, the discretized MMFLD updates $\mathbf{X}_k$ as in Algorithm \ref{alg:mmfld}.

The dual algorithm is characterized as follows. Letting $\Tilde{\mu}_0 := \frac{1}{N}\sum_{j=1}^N\delta_{Y_0^i}$,  we know that $Y_{k+1}^i = \nabla\phi(X_{k+1})$ is the value at time $t = \eta_{k}$ of the stochastic process (written in differential form) 
\begin{equation}\label{eq:pre_weighted}
dY_t^{i} = - \nabla \frac{\delta F((\nabla\phi^*)_\sharp\Tilde{\mu}_0)}{\delta\mu}(\nabla \phi^*(Y_0^i))dt+\sqrt{2\lambda[\nabla^2\phi^*(Y_t^i)]^{-1}}dB_t^i, 
\end{equation}
with initial value given by the previous iteration $\nabla\phi(X_k^i)$. 

The forward discretization for mirror Langevin dynamics is known to have vanishing bias as $\eta_k\to 0$, while the standard Euler-Maruyama discretization does not \citep{ahn2021efficient,jiang2021mirror} without stronger assumptions on the mirror map $\phi$ \citep{li2022mirror}. In particular, \citet{li2022mirror} utilized a mean-square analysis \citep{li2019stochastic,li2021} to show that under a \emph{modified} self-concordance assumption, the standard Euler-Maruyama discretization does have vanishing bias under the weaker notion of Wasserstein error. We remark that the forward discretization from \citet{ahn2021efficient} (and thus also ours) can be viewed as discretizing the drift but not the diffusion. \citet{ahn2021efficient,jiang2021mirror} analyze the discretization where the corresponding SDE in step 5 of Algorithm \ref{alg:mmfld} is simulated exactly. We will follow their approach for simplicity of exposition as the bottleneck of the algorithm is the oracle complexity (number of queries to $\nabla \frac{\delta F}{\delta\mu}$), not tracking the pure diffusion. Nevertheless, our experiments (Section \ref{sec:exp}) indicate that a one-step discretization generally suffices.

\begin{algorithm}[t]
\begin{algorithmic}[1]
\REQUIRE{mirror map $\nabla\phi$, timestep $\eta_k$, max iterations $T$, number of particles $N$}
\STATE{Initialize $X_0^1,\cdots,X_0^N\sim\mu_0$}
\FOR{$k=0,\cdots,T-1$}
    \FOR{$i\in [N]$}
        \STATE{$Y_0^i \gets \nabla\phi(X_k^i) - \eta_k\,\nabla\frac{\delta F(\mu_k)}{\delta \mu}(X_k^i)$}
        \STATE{Use an Euler-Maruyama discretization to simulate the following diffusion for $t\in[0,\eta_k]$:}
        \begin{align*}
        dY_t^i = \sqrt{2\lambda\,\bigl[\nabla^2\phi^*(Y_t^i)\bigr]^{-1}}\,dB_t
        \end{align*}
        \STATE{$X_{k+1}^i \gets \nabla\phi^*(Y_{\eta_k}^i)$}
    \ENDFOR
    \STATE{$\mu_{k+1} \gets \frac{1}{N}\sum_{i=1}^N \delta_{X_{k+1}^i}$}
\ENDFOR
    \RETURN{$X_T^1,\dots, X_T^N$}
       \caption{Discretized MMFLD}\label{alg:mmfld}
\end{algorithmic}
\end{algorithm}


\subsection{Propagation of Chaos}\label{sec:chaosmfnn}
In the motivating example for MFLD, we focus on the supervised risk minimization problem of mean-field neural networks \citep{nitanda2022convex,suzuki2023meanfield}. Let $\mathcal{D}$ be the data space. Let $h(x,\cdot):\mathcal{D}\to\mathbb{R}^m$ be a function parameterized by $x\in\mathcal{X}$, representing a single neuron with weight $x$. The mean-field model is obtained by integrating $h(x,\cdot)$ with respect to a probability measure $\mu\in\mathcal{P}_2(\mathcal{X})$ over the parameter space, corresponding to the average output of infinitely many neurons distributed according to $\mu$: $h_\mu(\cdot) = \int h(x,\cdot)d\mu(x)$. Given training examples $\{(z_j,y_j)\}_{j=1}^n\subset\mathcal{D}\times\mathbb{R}$ and loss function $\ell:\mathbb{R}^m\times\mathcal{Y}\to\mathbb{R}$, the empirical risk of the mean-field neural network is
\begin{equation}\label{eq:F_0}
    \textstyle F_0(\mu) = \frac{1}{n}\sum_{j=1}^n\ell(h_\mu(z_j),y_j).
\end{equation}
We generalize the setting of \citet{nitanda2024improved}. They consider loss functions $\ell$ that are scalar-valued in both arguments. We show that their method straightforwardly extends to vector-valued arguments. 
\begin{assumption}\label{asmp:loss}
    $\ell(\cdot,y)$ is convex and smooth, and $h(X,z)$ has a finite second moment, e.g.,
    \begin{enumerate}
        \item[(i)] There exists $L>0$ such that for every $a,b\in\mathbb{R}^m,y\in\mathcal{Y}$, it holds that $\ell(b,y)\le \ell(a,y) + \langle\nabla\ell(a,y),b-a\rangle + \frac{L}{2}\|b-a\|^2$. 
        \item[(ii)] There exists $R>0$ such that for every $z\in\mathcal{D}$, it holds that $\mathbb{E}_{X\sim\mu_*}\|h(X,z)\|^2\le R^2$.
    \end{enumerate}
\end{assumption}
In the scalar setting, this has been verified in many natural settings, including neural networks with bounded activation functions and the logistic and squared losses \citep{nitanda2022convex,chizat2022meanfield,suzuki2023meanfield,nitanda2024improved}. By extending to the vector setting, this also covers density estimation and discrepancy minimization, e.g. see \citet[Appendix A]{suzuki2023meanfield}. Using this assumption, we have the following LSI constant-free particle approximation error concerning the objective gap.
\begin{theorem}\label{thm:poc}
    Under Assumptions \ref{asmp:rel_lip_and_smooth} and \ref{asmp:loss}, it holds that \[
    \frac{\lambda}{N}{\KL(\mu_*^{(N)}\sep\mu_*^{\otimes N})}\le \frac{1}{N}\mathcal{L}^{(N)}(\mu_*^{(N)}) - \mathcal{L}(\mu_*) \le \frac{LR^2}{2N}.
    \]
\end{theorem}
\begin{proof}[Proof (sketch)]
    By inspecting the arguments in \citet[Appendix A]{nitanda2024improved}, we notice that (i) the additional $L^2$ regularization term $\lambda' \int\|\cdot\|^2d\mu$ is not necessary, (ii) all scalar inputs to $\ell$ can suitably be replaced by vector values, and (iii) the results also covers the constrained setting $\mathcal{P}_2(\mathcal{X})$ by convexity of $\mathcal{X}$.
\end{proof}

\subsection{Convergence Analysis for Discretized MMFLD}

To utilize the one-step interpolation argument \citep{vempalawibisono} in our setting, we follow \citet{wibisono2019proximal,jiang2021mirror} and consider the discretized MMFLD as a \emph{weighted} dynamics (see Lemma \ref{lem:weighted}). The final ingredients we require are self-concordance and the uniform-in-$N$ mirror LSI.

\begin{assumption}[Self-concordance {\citep{gatmiry2022convergence}}]\label{asmp:self_concordance}
the conjugate mirror map $\phi^*$ satisfies the following: for every $x\in \operatorname{int}\mathcal{X}$ and $u\in\mathbb{R}^d$, it holds that \[
    |\nabla^3\phi^*(x)[u,u,u]| \le 2c_1\langle u, \nabla^2\phi(x)u\rangle^{3/2}.
    \]
Additionally, we let $c_2 := \lambda_{\min}(\nabla^2\phi)>0$.
\end{assumption}
Note that we can always ensure $\phi$ satisfies the last condition (regarding $c_2$) by adding a strongly convex regularization term $\frac{c_2}{2}\|\cdot\|_2^2$, at the expense of slowing down optimization due to a worse LSI constant.
Self-concordance is a standard assumption for interior point and mirror analyses \citep{nesterov1994interior,bubeck2015convex} and holds for classic examples such as the log-barrier function on a polytope \citep{gatmiry2022convergence}, barriers on convex sets defined by hyperbolic constraints \citep{Nar16}, or epigraphs of matrix norms \citep{nesterov1994interior}. Existing barriers can also be straightforwardly combined to provide explicit barriers for product spaces, e.g. Moreover, for any $d$-dimensional convex set, there always exists a self-concordant barrier with parameter $O(d)$ \citep{Nar16}. 

\begin{assumption}[Uniform-in-$N$ mirror LSI]\label{asmp:umlsi}
    There exists a constant $\CLSI> 0$ such that $\mu_*^{(N)}$ satisfies a mirror log-Sobolev inequality with constant $\CLSI$, that is for every $\mu^{(N)}$, it holds that
    \begin{align*}
\KL(\mu^{(N)}\sep\mu_*^{(N)}) \le \frac{1}{2\CLSI}\FI_\phi(\mu^{(N)}\sep\mu_*^{(N)}).
    \end{align*}
\end{assumption}
This assumption can be verified using Proposition \ref{prop:mirror_lsi}, for instance, via the uniform-in-$N$ LSI \citep{chewi2024uniform} and a strongly-convex mirror map.

With these tools, we can finally provide the following convergence guarantee for the discretized MMFLD.
\begin{theorem}\label{thm:main}
    Suppose Assumptions \ref{asmp:rel_lip_and_smooth}-\ref{asmp:umlsi} hold. Then Algorithm \ref{alg:mmfld} with step size $\eta$ satisfies \begin{align*}
    &\frac{1}{N}\mathcal{L}^{(N)}(\mu_k^{(N)}) - \mathcal{L}(\mu_*)\le \exp\left(-\CLSI \lambda\eta k\right)\left(\frac{1}{N}\mathcal{L}^{(N)}(\mu_0^{(N)}) - \frac{1}{N}\mathcal{L}^{(N)}(\mu_*^{(N)})\right) + \frac{LR^2}{2N} + \frac{\delta_\eta}{2\CLSI\lambda},    
    \end{align*}
    where $\delta_\eta := 2\eta M_2^4M(\eta M_1^2 + 2\lambda d)$, $D:= \max_{u,v}\|\nabla\phi(u)-\nabla\phi(v)\|_2$, and $M := \exp\left(\frac{2c_1D}{\sqrt{c_2}}\right)$.
\end{theorem}
The proof is given in Appendix \ref{app:thm_main}. This extends the discretization analysis of \citet{nitanda2024improved,nitanda2025}, which to the best of our knowledge constitute the sharpest known bounds for propagation of chaos of MFLD, to the mirror Langevin setting.

\subsection{Convergence Analysis with Stochastic Gradients}
In certain applications of MFLD, using stochastic gradients is desirable, e.g. for computational efficiency or for differential privacy \cite{gu2025private}. To our knowledge, the most closely related work is \cite{hsieh2018}, who analyze \emph{mirrored} Langevin dynamics with stochastic gradients, but this is a different setting than ours. 

We will use $g_k^i = g_k^i(\mathbf{X}_{0:k},\omega^i)$ to denote the stochastic approximation of $\nabla\frac{\delta F(\mu_k)}{\delta\mu}(\mathbf{X}_k^i)$, where $\omega_k = (\omega_k^1,\dots,\omega_k^N)$ is the randomness used to generate the stochastic gradient. Here $g_k$ can depend on the whole history $\mathbf{X}_{0:k}=(\mathbf{X}_0,\dots,\mathbf{X}_k)$. We will impose the following assumption.
\begin{assumption}\label{asmp:stoch_gradients}
    The following hold. 
    \begin{enumerate}
        \item[(i)] $\E_{\omega_k^i}[g_k^i\mid \mathbf{X}_{0:k}] = \nabla\frac{\delta F(\mu_k)}{\delta\mu}(\mathbf{X}_k^i)$.
        \item[(ii)] $\E_{\omega_k^i}[\|g_k^i - \nabla \frac{\delta F(\mu_k)}{\delta\mu}(\mathbf{X}_k^i)\|^2]\le \sigma^2$ for some $\sigma>0$.
    \end{enumerate}
\end{assumption}
(ii) can be verified with a bound of $\sigma^2 = {R'}^2/B$ via $\sup_x\|\nabla\frac{\delta\ell(a,y)}{\delta\mu}(x)\|\le R'$ for all $a\in\mathbb{R}^d,y\in\mathcal{Y}$ and batched gradients of size $B$; see \cite{suzuki2023meanfield} for more details. By Assumption~\ref{asmp:self_concordance}, (ii) also implies $\E_{\omega_k^i}[\|g_k^i - \nabla \frac{\delta F(\mu_k)}{\delta\mu}(\mathbf{X}_k^i)\|^2_{[\nabla^2\phi]^{-1}}]\le \frac{\sigma^2}{c_2}$. 

\begin{theorem}\label{thm:stoch_grad}
    Suppose Assumptions \ref{asmp:rel_lip_and_smooth}-\ref{asmp:stoch_gradients} hold. Then Algorithm \ref{alg:mmfld} using stochastic gradients and step size $\eta$ satisfies 
    \begin{align*}
    &\frac{1}{N}\mathcal{L}^{(N)}(\mu_k^{(N)}) - \mathcal{L}(\mu_*)\le \exp\left(-\CLSI \lambda\eta k\right)\left(\frac{1}{N}\mathcal{L}^{(N)}(\mu_0^{(N)}) - \frac{1}{N}\mathcal{L}^{(N)}(\mu_*^{(N)})\right)+ \frac{LR^2}{2N} + \frac{2\delta_\eta + \sigma^2/c_2}{2\CLSI\lambda}.    
    \end{align*}
\end{theorem}
Here, $\delta_\eta$ is defined the same as in Theorem \ref{thm:main}, and the proof is provided in Appendix \ref{app:thm_stoch_grad}. 



\section{Numerical Experiments}\label{sec:exp}

\begin{figure}
    \centering
    \includegraphics[width=0.79\linewidth]{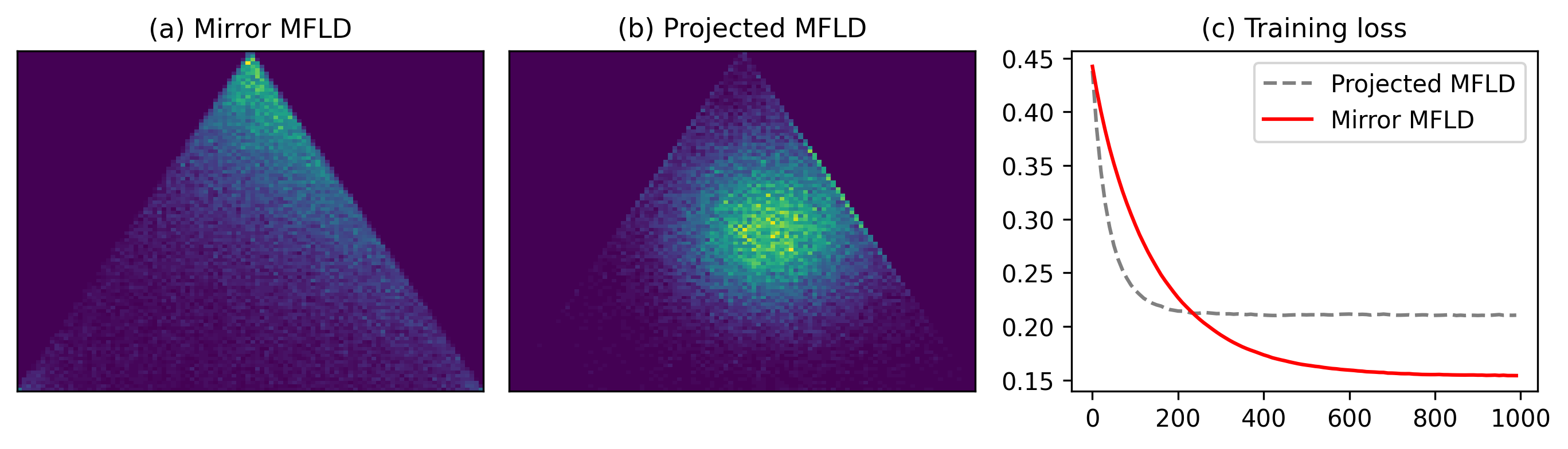}
    \includegraphics[width=0.79\linewidth]{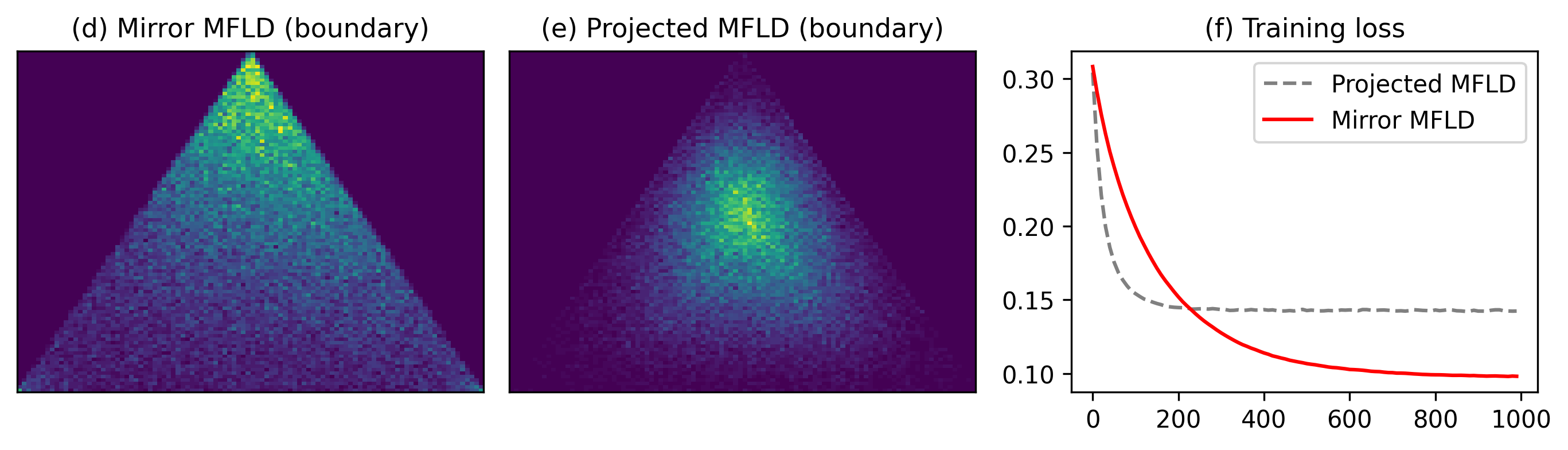}

    \caption{(a)-(c) MMFLD v.s. projected MFLD for optimizing a nonlinear mean-matching objective over the unit simplex; (d)-(f) with a boundary potential preventing mass accumulation.}
    \label{fig1}
    \vspace{-0.1in}
\end{figure}

In this section, we conduct experiments demonstrating the utility of (discretized) MMFLD. In Appendix~\ref{app:additional_experiments}, we provide additional experiments in high dimensions.

\subsection{Optimization over the Simplex}

We consider optimization on the unit simplex $\Delta_d = \{x\in\mathbb{R}^d\,|\,\sum_{i=1}^d x_i = 1,\; x_i\ge 0\}$. A natural choice for the mirror map is the entropy function $\phi(x) = \sum_i x_i\log x_i$, whose dual is given by $\phi^*(y) = \log\sum_i\exp y_i$; see \citet{Beck03} for details. We set $d=3$ for easy visualization. We aim to minimize the nonlinear functional
\begin{align*}
\textstyle F(\mu) = \left\lVert\int_{\Delta_3} x d\mu(x) - q\right\rVert^2 + \beta \int_{\Delta_3} \sum_{i=1}^3 \log\frac{1}{x_i} d\mu(x).
\end{align*}
The first term is a mean-matching score for a fixed target $q\in\mathbb{R}^3$, while the second term is a potential barrier preventing collapse towards the boundary $\partial \Delta_3$. As a baseline, we also implement projected MFLD, where particles are projected back onto the simplex after each update \eqref{eq:mfld_sde}. The diffusion step of MMFLD is simulated in one step, resulting in similar runtimes. Both algorithms are run using $50$K particles with $\eta=3\times 10^{-3}$, $\lambda=0.1$.

Figure \ref{fig1}a-c show the loss curves and final particle distributions when $\beta=0$. In particular, MFLD exhibits accumulation of mass along the boundary characteristic of projection methods (Figure \ref{fig1}b), which leads to excess error or overfitting. To avoid this trivial failure case, we also add a small barrier preventing convergence to the boundary by setting $\beta=10^{-4}$ (Figure \ref{fig1}d-f). We observe a drastic effect on projected MFLD as particles are repelled from the boundary; in contrast, MMFLD maintains a more even distribution as the mirror map already ensures the particles are well-behaved near the boundary. In both experiments, MFLD initially converges faster but MMFLD is able to achieve significantly smaller loss, thus demonstrating higher stability and optimality.

\subsection{Optimization over the Spectraplex}

We next study optimization over the \emph{spectraplex} $\mathcal{X}= \{\Sigma \in \mathbb{S}_+^d\mid\Tr(\Sigma) = 1\}$, where $\mathbb{S}_+^d$ is the set of $d\times d$ positive semidefinite matrices. Let $\Sigma^*\in\mathcal{X}$ be a target density matrix. Define the following mean-field functional: \[
\textstyle F(\mu) = \frac{1}{2}\left\|\mathbb{E}_\mu[\Sigma] - \Sigma^*\right\|_\mathsf{F}^2 + \frac{1}{2} \gamma \int \|\Sigma\|_{\mathsf{F}}^2\, d\mu(\Sigma)
\]

\begin{figure}
\centering
\includegraphics[width=0.3\linewidth]{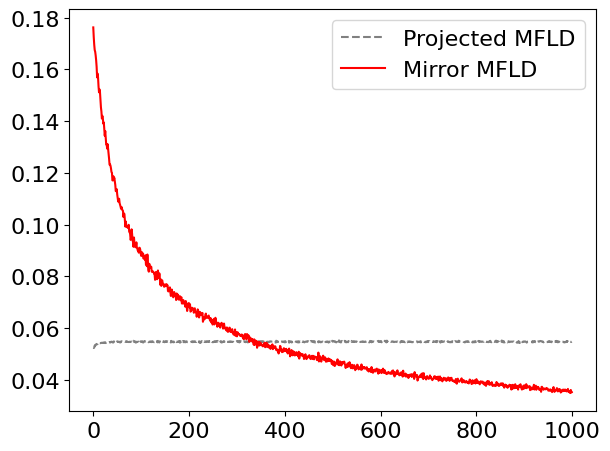}
\caption{Optimization over the spectraplex with von Neumann mirror map.}
\label{fig:psd}
\vspace{-0.1in}
\end{figure}

for $\gamma>0$. For projected MFLD, the projection step amounts to taking the eigendecomposition, and renormalizing the eigenvalues to sum to $1$. For MMFLD, we use the von Neumann mirror map $\phi(\Sigma) = \Tr(\Sigma\log \Sigma - \Sigma)$.

In the experiment, we take $d = 10$, $\gamma = 0.02$, $\eta = 0.3$, $\lambda = 0.1$, and $N = 2048$, and choose a $\Sigma^*\sim \mathrm{Unif}(\mathcal{X})$. Again, the diffusion is simulated using one step. We present the results in Figure \ref{fig:psd}. Observe that projected MFLD makes very little progress, compared to MMFLD. We hypothesize this is due to the projection step roughly canceling out all of the progress we make in the iteration.

\subsection{Classification with Mean-field Network}\label{sec:mfnn_classification}

We consider the more practical problem of optimizing a norm-constrained neural network for a classification task. Norm constraints are used to stabilize training and improve generalization by preventing overly large parameter values or overfitting. The domain is the $d$-dimensional unit ball with ball barrier $\phi(z)\propto -\log(1-\lVert z\rVert_2^2)$. For the score function, we use a two-layer ReLU network $f(x)=\frac{1}{N} \sum_{i=1}^N \text{relu}(\langle w_i,x\rangle)$ with the constraint $\lVert w_i\rVert_2\le 1$. The prediction is given by the sigmoid transformation $\sigma(f(x))$ with cross-entropy loss. We use XOR data with Gaussian noise, so that the nonlinearity is necessary to learn the decision boundaries. We implement MMFLD and projected MFLD to learn $\{w_i\}_{i=1}^N$. Hyperparameters are set as follows: $N=512$, $d=2$, $\eta=0.1$, $\lambda=10^{-3}$, $T=100$. Again, the diffusion is simulated in one step.

\begin{figure}
    \centering
    \includegraphics[width=0.79\linewidth]{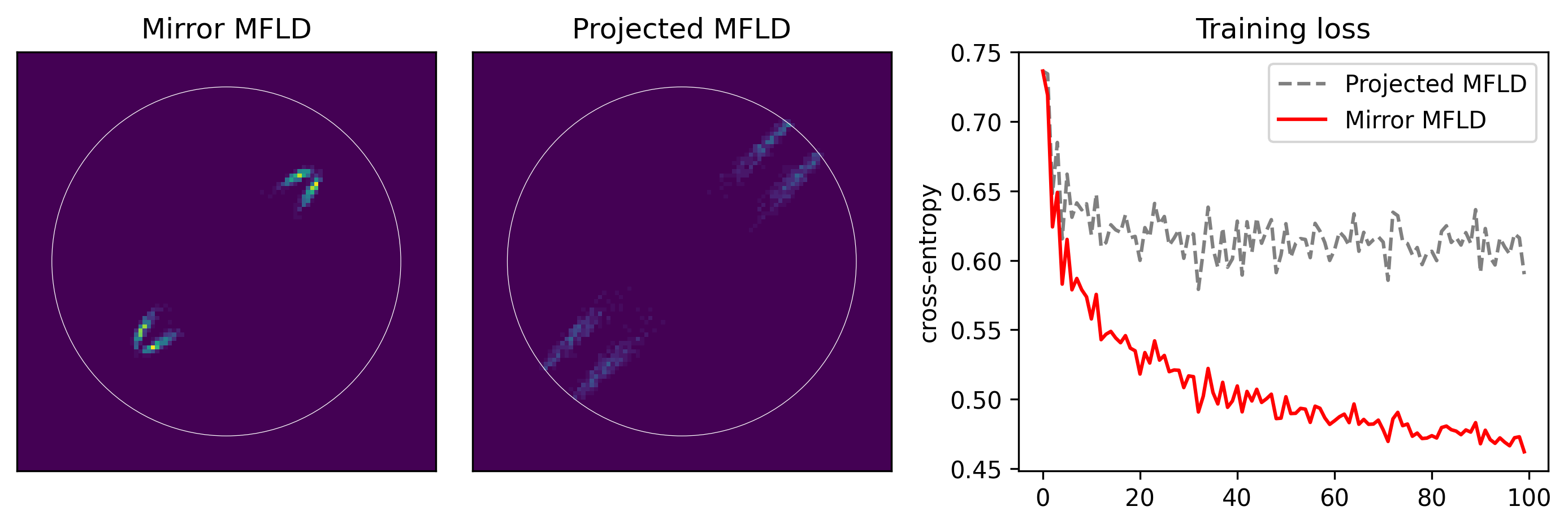}

    \caption{(Left) MMFLD v.s. projected MFLD for optimizing a norm-constrained two-layer ReLU neural network for classification; (right) neuron distribution along XOR directions.}
    \label{figxor}
    \vspace{-0.1in}
\end{figure}

We plot loss curves and visualize the neuron distribution in the XOR directions in Figure \ref{figxor}. MMFLD decreases the loss at a significantly faster rate than projected MFLD and continues to decrease error after 100 epochs, while projected MFLD stagnates after around 30-50 epochs. The same behavior is observed for the test curves. Moreover, the density plots show that MMFLD is able to align the neurons tightly in the XOR directions, while MFLD scatters the neurons over a wider region and saturates the norm constraint, increasing uncertainty in the prediction and generalization error.

\section{Conclusion}
We introduced the mirror mean-field Langevin dynamics to solve distributional optimization problems constrained to a convex subset of $\mathbb{R}^d$. Moreover, we obtained linear convergence rates for the continuous-time flow, space- and time-discretized, and stochastic gradient settings. An important direction for future work is to generalize to the mirror Poincar\'e setting as in \citet{chewi2020mld} since it is more natural from a geometric viewpoint \citep{chewi2023logconcavesampling}; this will require studying the mean-field Langevin dynamics under a Poincar\'e inequality as well \citep{nitanda2022convex}.

\section*{Acknowledgements}
This work was partially done when AG was at Boston University and JK was at the University of Tokyo and RIKEN AIP.

\bibliography{references}
\bibliographystyle{abbrvnat}

\newpage

\appendix

\section{Related Work}\label{app:related}
\paragraph{MLD and constrained sampling.} For the problem of constrained sampling, the mirrored Langevin dynamics was first proposed by \citet{hsieh2018}. Then, \citet{zhang2020wasserstein} introduced the more general mirror Langevin diffusion with Hessian-dependent covariance and its Monte Carlo discretization, the mirror Langevin algorithm (MLA). In continuous time, \citet{chewi2020mld} show exponential convergence of MLD under a mirror Poincar\'{e} inequality relative to the modified metric. In particular, this implies convergence of the Newton-Langevin diffusion when the mirror map equals the (strictly convex) potential $f$.

The convergence of the discretized algorithm is more difficult to analyze compared to the ordinary Langevin dynamics. \citet{zhang2020wasserstein} showed that the MLA iterates contract to a Wasserstein ball around the target density, but with a potentially non-vanishing bias. \citet{ahn2021efficient} proposed a different discretization of MLD which achieves vanishing bias, but assumed an exact computation of the Hessian Riemannian throughout the diffusion process.
On the other hand, \citet{li2022mirror} showed vanishing bias of MLA under more restrictive assumptions on the mirror map based on \citet{li2019stochastic}; and \citet{gatmiry2022convergence} utilized an additional second-order self-concordance assumption (cf. Assumption \ref{asmp:self_concordance}) and extensive Riemannian geometry for an exact discretization analysis on Hessian manifolds. Furthermore, Metropolis-adjusted (high accuracy) variants of MLA have been proposed to obtain algorithms whose stationary distributions are unbiased \citep{srinivasan2024fast,srinivasan2024high}. A parallel line of works considers a non-Euclidean generalization of the proximal sampler \cite{gopi2023algorithmic,gu2026functional}.

\paragraph{MFLD and propagation of chaos.} Initial mean-field analyses of two-layer neural networks were studied independently by several groups \citep{mei2018meanfield,chizat2018global,sirignano2018mfnn,hu2021mfld}, showing either weak convergence or linear convergence under restrictive assumptions, e.g. the mean-field interaction is small or the optimization is over-regularized \citep{eberle2019quantitative}. Under the Langevin setting, the linear convergence of the MFLD (under mild conditions) was first shown in \citet{nitanda2022convex,chizat2022meanfield}, leveraging the log-Sobolev inequality of the proximal Gibbs measure. Whereas \citet{nitanda2022convex} studied the time-discretization and offered a primal-dual analysis, \citet{chizat2022meanfield} studied the annealed dynamics where $\lambda_t\to 0$ in \eqref{eq:mfld_sde_intro}. 

Following works focused on proving propagation of chaos bounds for the corresponding finite particle dynamics, with the noisy gradient descent of finite-width neural networks as a prominent application. While earlier arguments indicated the possibility of error blowing up exponentially in time \citep{mei2018meanfield}, uniform-in-time guarantees were obtained in the case of super-quadratic regularization \citep{suzuki2023uniformintime} and for more natural quadratic regularization \citep{chen2023uniform,suzuki2023meanfield,kook2024} via a uniform-in-$N$ LSI \citep{wang2024,chewi2024uniform}. Furthermore, this error was shown to be independent of the LSI constant in \citet{nitanda2024improved} and the corresponding convergence analysis was further refined by \citet{chewi2024uniform,nitanda2025}, respectively relaxing quadratic regularization to strongly convex regularization and improving the propagation of chaos results using a directional LSI. Also, improved sample complexity results have been established for certain structured classification problems \citep{suzuki2023feature,mousavi-hosseini2025learning}.

\section{Mirror LSI}\label{app:mlsi}
Here, we provide a simple and general method to establish a mirror LSI.
\begin{proposition}[\citet{Daaloul25}]\label{prop:mirror_lsi}
    If $\nu\in\mathcal{P}(\mathcal{X})$ satisfies a LSI with constant $\CLSI$ and $\phi$ is $\alpha$-strongly convex, then $\nu$ satisfies a mirror LSI with constant $\CLSI/\alpha$.
\end{proposition}
For completeness, we provide the proof from \citep{Daaloul25}.
\begin{proof}
    We have \begin{align*}
        \FI_\phi(\mu\sep\nu) &= \int\left\|\nabla\log \frac{\mu}{\nu}\right\|_{[\nabla^2\phi \circ \nabla \phi^*]}^2d\nu\\
        &\ge \frac{1}{\alpha}\int \left\|\nabla\log \frac{\mu}{\nu}\right\|^2d\nu\\
        &\ge \frac{2\CLSI}{\alpha}\KL(\mu\sep\nu).
    \end{align*}
    This concludes the proof.
\end{proof}

As another example, we provide an explicit family of (one-dimensional) distributions that satisfies a mirror LSI, due to \citet{Furioli19}.
\begin{proposition}[{\citet[Theorem 3]{Furioli19}}]
Define the densities \[
\nu_{m,\lambda}\propto (1-x)^{-1+\frac{1-m}{\lambda}}(1+x)^{-1+\frac{1+m}{\lambda}}
\]
for $\lambda>0$ and $m\in(-1,1)$. Let the barrier function be \[
\phi(x) = \frac{1}{2}[(1+x)\log(1+x)+(1-x)\log(1-x)].
\]
Suppose $1 - \frac{\lambda}{2}>0$ if $m = 0$ and $1-\frac{\lambda}{2}\ge |m|$, otherwise. Then, $\nu_{m,\lambda}$ satisfies a mirror LSI with constant \[
\CLSI = \frac{\lambda}{2}\left(1-\frac{\lambda}{2}+\sqrt{\left(1-\frac{\lambda}{2}\right)^2-m^2}\right)^{-1}.
\]
\end{proposition}

\subsection{Properties of the mirror LSI}
\begin{lemma}[Stability under bounded perturbation, {\citet[Lemma 2]{jiang2021mirror}}]
    Suppose $\nu$ satisfies a mirror LSI with constant $\CLSI$, then if $c_1 \le \frac{d\mu}{d\nu}\le c_2$, then $\mu$ satisfies a mirror LSI with constant $\frac{c_1}{c_2}\CLSI$.
\end{lemma}
We remark that \citet{jiang2021mirror} made a minor typo and stated the flipped result is the mirror LSI constant. For completeness, we provide the proof from \cite{jiang2021mirror}.
\begin{proof}
    Using the variational principle of entropy, for $\mu\ll \nu$, we have \begin{align}
    \mathsf{Ent}_\mu(f^2) \le \left\|\frac{d\mu}{d\nu}\right\|_\infty\mathsf{Ent}_\nu(f^2) \le c_2\cdot \mathsf{Ent}_\nu(f^2).   \label{eq:stab1}     
    \end{align}
    Next, we have \begin{align}
        \frac{2}{\CLSI}\int\|\nabla f(x)\|^2_{[\nabla^2\phi(x)]^{-1}}d\nu &= \frac{2}{\CLSI}\int\|\nabla f(x)\|^2_{[\nabla^2\phi(x)]^{-1}}\frac{d\nu}{d\mu}d\mu\notag\\
        &\le \frac{2}{c_1}\int\|\nabla f(x)\|^2_{[\nabla^2\phi(x)]^{-1}}d\mu.\label{eq:stab2}
    \end{align}
    Combining \eqref{eq:stab1} and \eqref{eq:stab2}, we have \[
    \frac{2c_2}{c_1\CLSI}\int\|\nabla f(x)\|^2_{[\nabla^2\phi(x)]^{-1}}d\mu \ge \mathsf{Ent}_\mu(f^2),
    \]
    as desired.
\end{proof}

\begin{lemma}[Convolution]
    Suppose $\mu_1, \mu_2$ respectively satisfy mirror LSI with constant $\CLSI^{(1)}, \CLSI^{(2)}$. Then, their convolution $\mu := \mu_1 * \mu_2$ satisfies a mirror LSI with constant $\left(\frac1{\CLSI^{(1)}}+\frac1{\CLSI^{(2)}}\right)^{-1}$.
\end{lemma}
\begin{proof}
    Let $X\sim \mu_1, Y\sim \mu_2$, and let $f:\R^d\to\R$. Using subadditivity of entropy, \begin{align*}
        \Ent_\mu(f^2)&\le \E[f(X+Y)^2\mid X] + \E[f(X+Y)^2\mid Y]\\
        &\le \left(\frac{2}{\CLSI^{(1)}}+\frac{2}{\CLSI^{(2)}}\right)\E[\|\nabla f\|_{[\nabla^2\phi]^{-1}}^2],
    \end{align*}
which concludes the proof.
\end{proof}

\begin{lemma}[Tensorization under separable mirror map]
    Let $\{\mu_i\}_{i\in[m]}$ be probability measures on $\mathbb{R}^{d_i}$, $\{\phi_i:\mathbb{R}^{d_i}\to \mathbb{R}\}_{i\in[m]}$ be mirror maps that satisfy the conditions in Section \ref{sec:prelim}, and $\mu_i$ satisfies a mirror LSI (with respect to $\phi_i$) with constant $\CLSI^{(i)}$, for every $i \in[n]$. Define $\mu := \bigotimes_{i=1}^m \mu_i$ and the separable map $\phi(x) := \sum_{i=1}^m \phi_i(x_i)$, which satisfies \[
    \nabla^2\phi(x) = \mathsf{diag}(\nabla^2\phi_1(x_1),\dots, \nabla^2\phi_m(x_m)).
    \]
    Then $\mu$ satisfies a mirror LSI (with respect to $\phi$) with constant $\min_{i\in[m]} \CLSI^{(i)}$.
\end{lemma}
\begin{proof}
Let $X_i\sim \mu_i$ be independent for $i\in[m]$. Let $X_{-i}$ denote all coordinates except $i$. For any $f\ge 0$, we have \begin{align*}
        \mathsf{Ent}_\mu(f) &\le \sum_{i=1}^m \mathbb{E}\left[\mathsf{Ent}_{\mu_i}(f(X_1,\dots, X_m)\mid X_{-i}\right]\\
        &\le \sum_{i=1}^m \frac{2}{\CLSI^{(i)}}\mathbb{E}\left[\|\nabla_{i} f(X_1,\dots, X_m)\|_{[\nabla_{i}^2\phi_i(X_i)]^{-1}}^2\mid X_{-i}\right]\\
        &\le \frac{2}{\min_{i\in[m]}\CLSI^{(i)}}\sum_{i= 1}^m\mathbb{E}\left[\|\nabla_{i} f(X_1,\dots, X_m)\|_{[\nabla_{i}^2\phi_i(X_i)]^{-1}}^2\mid X_{-i}\right]\\
        &= \frac{2}{\min_{i\in[m]}\CLSI^{(i)}}\mathbb{E}\left[\|\nabla f(X_1,\dots, X_m)\|^2_{[\nabla^2\phi(X_1,\dots, X_m)]^{-1}}\right],
    \end{align*}
    where the first inequality uses tensorization of entropy, the second uses the mirror LSI, and the equality follows from separability of the mirror map. The claim follows.
\end{proof}

\section{Deferred Proofs and Results}\label{app:proofs}
\subsection{Properties of $\phi$}
\begin{lemma}[Properties of $\phi$]\label{lem:properties_of_phi}
The following hold:
\begin{enumerate}
    \item[(i)] $\nabla \phi(\mathcal{X})=\mathbb{R}^d$, the gradient map $\nabla\phi:\mathcal{X}\to\mathbb{R}^d$ is bijective and $\nabla^2\phi(x)\succ 0$ for all $x\in\mathcal{X}$.
    \item[(ii)] $\nabla\phi^*(y) = \arg\max_{x\in\mathcal{X}}\langle x,y\rangle-\phi(x)$.
    \item[(iii)]  $\nabla\phi^* = (\nabla\phi)^{-1}$, so $\nabla\phi(\nabla\phi^*(y)) = y$ for all $y\in\mathbb{R}^d$, and $\nabla^2\phi(x) =\nabla^2\phi^*(\nabla\phi(x))^{-1}$ for all $x\in\mathcal{X}$.
\end{enumerate}
\end{lemma}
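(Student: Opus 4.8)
The plan is to obtain all three items from the classical duality theory of convex functions of Legendre type \cite[§26]{rockafellar1997convex}, once we check $\phi$ fits that framework. The stated assumptions --- $\phi$ thrice-differentiable and strictly convex on the open convex set $\mathcal{X}$, with $\|\nabla\phi(x)\|\to\infty$ as $x\to\partial\mathcal{X}$ --- say precisely that $\phi$ (extended by $+\infty$ off $\overline{\mathcal{X}}$) is a closed proper convex function that is \emph{essentially smooth} (the gradient blow-up) and \emph{essentially strictly convex} (strict convexity on $\mathcal{X}$), i.e. of Legendre type. Then \cite[Theorem 26.5]{rockafellar1997convex} gives that $\phi^*$ is also of Legendre type and that $\nabla\phi$ is a homeomorphism from $\operatorname{int}(\operatorname{dom}\phi)=\mathcal{X}$ onto $\operatorname{int}(\operatorname{dom}\phi^*)$ with inverse $\nabla\phi^*$. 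Combined with the standing normalization that $\nabla\phi$ is onto $\mathbb{R}^d$ (equivalently $\operatorname{dom}\phi^*=\mathbb{R}^d$, built into our setup $\mathcal{Y}=\nabla\phi(\mathcal{X})=\mathbb{R}^d$), this already yields the bijectivity in (i) as well as $\nabla\phi^*=(\nabla\phi)^{-1}$ and $\nabla\phi(\nabla\phi^*(y))=y$ in (iii). Injectivity can alternatively be checked by hand: if $\nabla\phi(x_1)=\nabla\phi(x_2)=y$ with $x_1\neq x_2$, adding the two strict first-order inequalities $\phi(x_2)>\phi(x_1)+\langle y,x_2-x_1\rangle$ and $\phi(x_1)>\phi(x_2)+\langle y,x_1-x_2\rangle$ gives $0>0$.

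For (ii), fix $y\in\mathbb{R}^d$ and observe that $x\mapsto\langle x,y\rangle-\phi(x)$ is strictly concave on $\mathcal{X}$. By (i) there is a unique $x^*\in\mathcal{X}$ with $\nabla\phi(x^*)=y$, i.e. a unique interior critical point, which by strict concavity is the unique maximizer; hence $\phi^*(y)=\langle x^*,y\rangle-\phi(x^*)$ and $\arg\max_{x\in\mathcal{X}}\langle x,y\rangle-\phi(x)=\{x^*\}$. On the other hand, the conjugate-subgradient (Fenchel--Young) equality $x\in\partial\phi^*(y)\iff\phi(x)+\phi^*(y)=\langle x,y\rangle\iff y\in\partial\phi(x)$ shows $\partial\phi^*(y)=\{x^*\}$, and since $\phi^*$ is essentially smooth this equals $\{\nabla\phi^*(y)\}$. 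Therefore $\nabla\phi^*(y)=x^*=\arg\max_{x\in\mathcal{X}}\langle x,y\rangle-\phi(x)$, which is (ii).

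It remains to prove the Hessian identity in (iii), and along the way $\nabla^2\phi\succ0$. Strict convexity gives $\nabla^2\phi\succeq0$; together with $\phi\in C^3$ and the (standing, as is customary for barriers in the mirror-Langevin literature) positive-definiteness $\nabla^2\phi\succ0$ on $\mathcal{X}$, the inverse function theorem makes $\nabla\phi$ a $C^2$-diffeomorphism $\mathcal{X}\to\mathbb{R}^d$, so $\nabla\phi^*=(\nabla\phi)^{-1}$ is $C^2$ and $\phi^*$ is $C^2$. (Conversely, if one prefers to not assume $\nabla^2\phi\succ0$, it follows: once $\nabla\phi^*$ is known differentiable, differentiating $\nabla\phi^*(\nabla\phi(x))=x$ forces $\nabla^2\phi(x)$ to be nonsingular, hence $\succ0$.) Finally, differentiating $\nabla\phi(\nabla\phi^*(y))=y$ in $y$ gives $\nabla^2\phi(\nabla\phi^*(y))\,\nabla^2\phi^*(y)=I$; evaluating at $y=\nabla\phi(x)$ and using $\nabla\phi^*(\nabla\phi(x))=x$ yields $\nabla^2\phi(x)=\big(\nabla^2\phi^*(\nabla\phi(x))\big)^{-1}$ for all $x\in\mathcal{X}$.

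I expect the only genuinely delicate point --- as opposed to a routine computation --- to be justifying that the Legendre-duality machinery applies: that $\phi^*$ is finite, $C^2$, and strictly convex on all of $\mathbb{R}^d$, and that the supremum defining $\phi^*(y)$ is attained at an \emph{interior} point of $\mathcal{X}$ so that first-order conditions are legitimate. This is exactly where essential smoothness (the gradient blow-up at $\partial\mathcal{X}$) is used, and it is the hypothesis that must be invoked with care; everything else is bookkeeping with the chain rule.
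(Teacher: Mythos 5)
Your argument is correct and is essentially the proof the paper intends: the paper gives no proof of this lemma, treating it as the standard Legendre-duality package from \cite[\S 26]{rockafellar1997convex} (the citation appears in the sentence defining $\phi$), which is exactly what you invoke and then fill in with routine chain-rule bookkeeping. Your one point of care --- that $\nabla^2\phi\succ0$ does not follow from strict convexity of a $C^2$ function alone and must be taken as a standing assumption --- is consistent with the paper, which assumes this implicitly throughout (e.g.\ $c=\lambda_{\min}(\nabla^2\phi)>0$ in Assumption \ref{asmp:self_concordance} and the ubiquitous use of $[\nabla^2\phi]^{-1}$).
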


\subsection{Proof of Theorem \ref{thm:mld_convergence}}\label{sec:b1}

\begin{proof}
Using the Fokker-Planck PDE of \eqref{eq:mld_primal}, integration by parts, and Assumption \ref{asmp:lsi}, we have \begin{align*}
        \frac{d}{dt}\KL(\mu_t\sep\mu) &= \int \frac{\partial\mu_t}{\partial t}(x)\log\frac{\mu_t}{\mu
        }(x)dx\\
        &= \lambda\int \nabla \cdot \left(\mu_t(x)[\nabla^2\phi(x)]^{-1}\nabla \log \frac{\mu_t}{\mu}(x)\right)\log \frac{\mu_t}{\mu}(x)dx\\
        &= -\lambda\int \left\langle \nabla \log \frac{\mu_t}{\mu}, [\nabla^2\phi]^{-1}\nabla\log \frac{\mu_t}{\mu}\right\rangle d\mu_t\\
        &= -\lambda  \FI(\mu_t\sep\mu)\\
        &\le -2\lambda\CLSI\cdot \KL(\mu_t\sep\mu).
    \end{align*}
    The claim follows from Gr\"onwall's inequality.    
\end{proof}

\subsection{Proof of Theorem \ref{thm:unique}}\label{app:proofunique}

\begin{proof}
This follows from an analogous argument to that for the existence and uniqueness of the mirror Langevin dynamics \citep{zhang2020wasserstein,chewi2020mld}. Well-posedness of \eqref{eq:mmfld_L} in the dual space has been shown, for instance, via \citet{rockner2021well}. This implies well-posedness in primal space via Lemma \ref{lem:properties_of_phi}. If \eqref{eq:mmfld_L} admits a minimizer, its uniqueness follows from strict convexity of entropy. The property $\mu_* \propto \exp\left(-\frac{1}{\lambda}\frac{\delta F(\mu_*)}{\delta\mu}\right)$ follows from the first-order optimality condition, which says that $\frac{\delta F(\mu_*)}{\delta\mu} + \lambda \log \mu_* $ is a constant $\mu_*$-almost everywhere. Then due to the entropy term, $\mu_*$ has positive density everywhere. Further details can be found in e.g. \citet{mei2018meanfield,chizat2022meanfield}.
\end{proof}

\subsection{Entropy sandwich}

\begin{lemma}[Entropy sandwich]\label{lem:entropy_sandwich}
    Under Assumption \ref{asmp:smoothness}, let $\mu_*$ be the unique minimizer of \eqref{eq:mmfld_L}. For $\mu\in\mathcal{P}_2(\mathcal{X})$, let $\hat{\mu}$ be the proximal Gibbs measure associated to $\mu$. The following hold.
    \begin{enumerate}
        \item[(i)] In the sense of functionals on the space of probability density functions, \[
        \frac{\delta\mathcal{L}(\mu)}{\delta\mu} = \lambda \frac{\delta}{\delta\mu'}{\KL}(\mu'\sep\hat{\mu})\bigg|_{\mu'=\mu} = \lambda \log\frac{\mu}{\hat{\mu}}.
        \]
        \item[(ii)] For any $\mu,\mu'\in\mathcal{P}_2(\mathcal{X})$, it holds \[
        \mathcal{L}(\mu) + \int \frac{\delta\mathcal{L}(\mu)}{\delta\mu}d(\mu'-\mu) + \lambda{\KL(\mu'\sep\mu)}\le \mathcal{L}(\mu').
        \]
        \item[(iii)] For any $\mu\in\mathcal{P}_2(\mathcal{X})$,   it holds \[
    \lambda{\KL(\mu\sep\mu_*)}\le \mathcal{L}(\mu) - \mathcal{L}
(\mu_*)\le \lambda{\KL(\mu\sep\hat{\mu})}.
\]
    \end{enumerate}
\end{lemma}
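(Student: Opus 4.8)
The plan is to establish the three parts of the entropy sandwich in order, mirroring the unconstrained MFLD argument of \cite{nitanda2022convex,chizat2022meanfield} and verifying that the restriction to $\mathcal{P}_2(\mathcal{X})$ changes nothing essential. The key observation throughout is that on $\mathcal{X}$, the entropy $\operatorname{Ent}(\mu)$ is still the negative differential entropy with respect to Lebesgue measure, so its first variation at $\mu$ is $\log\mu + 1$, and $F$ is still linearly convex; the only difference is that test perturbations $\mu' - \mu$ must keep us inside $\mathcal{P}_2(\mathcal{X})$, which is automatic since $\mathcal{P}_2(\mathcal{X})$ is convex.

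\textbf{Part (i).} First I would compute $\frac{\delta\mathcal{L}(\mu)}{\delta\mu} = \frac{\delta F(\mu)}{\delta\mu} + \lambda(\log\mu + 1)$ directly from the definition of first variation and linearity. Since $\hat\mu \propto \exp(-\frac1\lambda \frac{\delta F(\mu)}{\delta\mu})$, we have $\lambda\log\hat\mu = -\frac{\delta F(\mu)}{\delta\mu} - \lambda\log Z$ for the normalizing constant $Z$, hence $\frac{\delta\mathcal{L}(\mu)}{\delta\mu} = \lambda\log\frac{\mu}{\hat\mu} + (\text{const})$. Up to the additive constant (which is the ambiguity inherent in first variations, and which integrates to zero against $\mu' - \mu$), this equals $\lambda\log\frac{\mu}{\hat\mu}$. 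The middle equality follows because $\frac{\delta}{\delta\mu'}\operatorname{KL}(\mu'\sep\hat\mu)|_{\mu'=\mu} = \log\frac{\mu}{\hat\mu} + 1$, again equal up to a constant.

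\textbf{Part (ii).} This is the convexity step. I would write $\mathcal{L}(\mu') - \mathcal{L}(\mu) = [F(\mu') - F(\mu)] + \lambda[\operatorname{Ent}(\mu') - \operatorname{Ent}(\mu)]$. Linear convexity of $F$ gives $F(\mu') - F(\mu) \ge \int \frac{\delta F(\mu)}{\delta\mu}d(\mu'-\mu)$ (the standard first-order characterization of convexity along the segment $\mu + \epsilon(\mu'-\mu)$, which stays in $\mathcal{P}_2(\mathcal{X})$). For the entropy, the exact identity $\operatorname{Ent}(\mu') - \operatorname{Ent}(\mu) = \int(\log\mu)\,d(\mu'-\mu) + \operatorname{KL}(\mu'\sep\mu)$ holds by direct expansion: $\operatorname{KL}(\mu'\sep\mu) = \int\mu'\log\frac{\mu'}{\mu} = \operatorname{Ent}(\mu') - \int\mu'\log\mu$, so rearranging and subtracting $\int\mu\log\mu = \operatorname{Ent}(\mu)$ yields exactly this. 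Adding the two contributions and using Part (i) (noting the additive constant cancels against $d(\mu'-\mu)$) gives the claimed inequality.

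\textbf{Part (iii).} The right-hand bound is immediate: apply (ii) with the roles $\mu \leftrightarrow \mu_*$ swapped, i.e. $\mathcal{L}(\mu_*) + \int\frac{\delta\mathcal{L}(\mu_*)}{\delta\mu}d(\mu - \mu_*) + \lambda\operatorname{KL}(\mu\sep\mu_*) \le \mathcal{L}(\mu)$; but by the first-order optimality condition from Theorem \ref{thm:unique}, $\frac{\delta\mathcal{L}(\mu_*)}{\delta\mu}$ is constant $\mu_*$-a.e. (indeed $\mu_* = \hat\mu_*$), so the linear term vanishes — wait, that gives the \emph{left} bound $\lambda\operatorname{KL}(\mu\sep\mu_*) \le \mathcal{L}(\mu) - \mathcal{L}(\mu_*)$. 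For the right bound, apply (ii) in the form $\mathcal{L}(\mu) + \int\frac{\delta\mathcal{L}(\mu)}{\delta\mu}d(\mu_* - \mu) + \lambda\operatorname{KL}(\mu_*\sep\mu) \le \mathcal{L}(\mu_*)$, use Part (i) to replace the integrand by $\lambda\log\frac\mu{\hat\mu}$, and compute $\int\log\frac\mu{\hat\mu}\,d(\mu_*-\mu) + \operatorname{KL}(\mu_*\sep\mu) = \operatorname{KL}(\mu_*\sep\hat\mu) - \operatorname{KL}(\mu\sep\hat\mu) \le -\operatorname{KL}(\mu\sep\hat\mu) + \operatorname{KL}(\mu_*\sep\hat\mu)$; rearranging gives $\mathcal{L}(\mu) - \mathcal{L}(\mu_*) \le \lambda\operatorname{KL}(\mu\sep\hat\mu) + \lambda\operatorname{KL}(\mu_*\sep\hat\mu) - \cdots$ — here I need to be careful that the $\operatorname{KL}(\mu_*\sep\hat\mu)$ term is handled correctly; the clean route is to chain (ii) both ways and cancel, as in \cite[Prop.~2.5]{chizat2022meanfield}.

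\textbf{Main obstacle.} The analysis is essentially bookkeeping, and the one genuine point to check is that no boundary terms or regularity obstructions arise when restricting to $\mathcal{P}_2(\mathcal{X})$: specifically that $\mu_*$ (hence every $\hat\mu$) has a density positive on the interior, so that $\log\frac\mu{\hat\mu}$ and the various KL divergences are well-defined and finite for the measures appearing, and that the first-variation computations do not pick up a contribution from $\partial\mathcal{X}$. Since $\mathcal{X}$ is convex and the entropic term forces full support (Theorem \ref{thm:unique}), and since integration against $d(\mu'-\mu)$ only ever sees the interior, these issues do not actually bite — which is precisely why, as the paper remarks, the entropy sandwich "holds exactly in the constrained setting as well." I would state this convexity-of-$\mathcal{P}_2(\mathcal{X})$ observation explicitly at the start of the proof and otherwise follow the unconstrained argument verbatim.
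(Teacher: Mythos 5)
Your proposal is correct and is essentially the paper's proof: the paper simply observes that the unconstrained argument of \cite{nitanda2022convex} (Proposition 1) carries over verbatim because $\mathcal{P}_2(\mathcal{X})$ is convex, which is exactly the argument you write out in detail. The only hesitation, in part (iii), resolves itself: taking $\mu'=\mu_*$ in (ii) and using $\int\log\frac{\mu}{\hat\mu}\,d(\mu_*-\mu)+\operatorname{KL}(\mu_*\sep\mu)=\operatorname{KL}(\mu_*\sep\hat\mu)-\operatorname{KL}(\mu\sep\hat\mu)$ gives $\mathcal{L}(\mu)-\mathcal{L}(\mu_*)\le\lambda\operatorname{KL}(\mu\sep\hat\mu)-\lambda\operatorname{KL}(\mu_*\sep\hat\mu)\le\lambda\operatorname{KL}(\mu\sep\hat\mu)$, since the extra term is nonnegative and enters with a favorable sign.
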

\begin{proof}
    It is straightforward to see that the proof of \citet[Proposition 1]{nitanda2022convex} still holds over $\mathcal{P}_2(\mathcal{X})$ by convexity of $\mathcal{X}$.
\end{proof}

\subsection{Proof of Theorem \ref{thm:mmfld_conti}}\label{app:proofconti}

\begin{proof}
We combine Section \ref{sec:b1} and the entropy sandwich inequality in a straightforward manner. From the Fokker-Planck PDE \eqref{eq:mmfld_pde} and applying integration by parts, we have by Assumption \ref{asmp:ULSI} and Lemma \ref{lem:entropy_sandwich} that
    \begin{align*}
        \frac{d}{dt}(\mathcal{L}(\mu_t) - \mathcal{L}(\mu)) &= \int \frac{\delta \mathcal{L}(\mu_t)}{\delta\mu}(x)\frac{\partial\mu_t}{\partial t}(x)dx\\
        &= \lambda \int \frac{\delta\mathcal{L}(\mu_t)}{\delta\mu}(x)\nabla \cdot \left(\mu_t(x)[\nabla^2\phi(x)]^{-1}\nabla \log \frac{\mu_t}{\hat{\mu}_t}(x)\right)dx\\
        &= -\lambda\int \left\langle \nabla\frac{\delta\mathcal{L}(\mu_t)}{\delta\mu},[\nabla^2\phi]^{-1}\nabla \log \frac{\mu_t}{\hat{\mu}_t} \right\rangle d\mu_t\\
        &=  -\lambda^2\int \left\langle \nabla \log \frac{\mu_t}{\hat{\mu}_t} ,[\nabla^2\phi]^{-1}\nabla \log \frac{\mu_t}{\hat{\mu}_t} \right\rangle d\mu_t\\
        &= - \lambda^2{\FI(\mu_t\sep\hat{\mu}_t)}\\
        &\le -2\lambda^2\CLSI\cdot {\KL(\mu_t\sep\hat{\mu}_t})\\
        &\le -2\lambda\CLSI(\mathcal{L}(\mu_t) - \mathcal{L}(\mu_*)).
    \end{align*}
    The claim follows from Gr\"onwall's inequality.
\end{proof}

\subsection{Preliminaries for convergence results}

In this section, we collect additional results used in the proof of Theorem \ref{thm:main}. First, we introduce the characterization of the discretized MMFLD as a weighted dynamics. Using $X_t^i = \nabla\phi^*(Y_t^i)$ and $\mu_t = (\nabla\phi^*)_\sharp \Tilde{\mu}_t = \frac{1}{N}\sum_{j=1}^N\delta_{X_t^i}$, we can show the following.
\begin{lemma}\label{lem:weighted}
\eqref{eq:pre_weighted} is equivalent to the primal dynamics 
\begin{equation}\label{eq:weighted}
    dX_t^i = \left(\nabla \cdot G_t^i(X_t^i) - G_t^i(X_t^i)\nabla \frac{\delta F(\mu_t)}{\delta\mu}(X_t^i) + \pi^i\right)dt + \sqrt{2\lambda G_t^i(X_t)}dB_t^i,
\end{equation}
with \emph{shifted} covariance $G_t^i$ and drift $\pi^i$ defined by \begin{align*}
G_t^i(X_t^i) &= [\nabla^2\phi(X_t^i)]^{-1},\\
        \pi^i &= G_t^i(X_t^i)\left(\nabla \frac{\delta F(\mu_t)}{\delta\mu}(X_t^i) - \nabla \frac{\delta F(\mu_0)}{\delta\mu}(X_0^i)\right).
    \end{align*}
\end{lemma}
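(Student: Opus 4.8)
The plan is to transform the dual SDE \eqref{eq:pre_weighted} into a primal SDE via It\^o's formula, using the substitution $X_t^i = \nabla\phi^*(Y_t^i)$, and then rewrite the resulting drift and diffusion coefficients in terms of $\nabla^2\phi$ rather than $\nabla^2\phi^*$. Since \eqref{eq:pre_weighted} has the form $dY_t^i = b^i\,dt + \sigma(Y_t^i)\,dB_t^i$ with $b^i = -\nabla\frac{\delta F((\nabla\phi^*)_\sharp\tilde\mu_0)}{\delta\mu}(\nabla\phi^*(Y_0^i))$ constant in $t$ and $\sigma(y)\sigma(y)^\top = 2\lambda[\nabla^2\phi^*(y)]^{-1}$, applying It\^o's formula componentwise to the map $y\mapsto\nabla\phi^*(y)$ yields
\[
dX_t^i = D\nabla\phi^*(Y_t^i)\,b^i\,dt + \tfrac12\,\mathrm{tr}\!\left[D^2\nabla\phi^*(Y_t^i)\,\sigma\sigma^\top(Y_t^i)\right]dt + D\nabla\phi^*(Y_t^i)\,\sigma(Y_t^i)\,dB_t^i,
\]
where $D\nabla\phi^*(y) = \nabla^2\phi^*(y)$ and the second-order term is the vector with $k$-th component $\tfrac12\sum_{l,m}\partial_l\partial_m(\nabla\phi^*)_k(y)\,(\sigma\sigma^\top)_{lm}(y)$.

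The key identities are those from Lemma \ref{lem:properties_of_phi}(iii): $\nabla^2\phi^*(Y_t^i) = [\nabla^2\phi(X_t^i)]^{-1} =: G_t^i(X_t^i)$. This immediately gives the diffusion coefficient: $D\nabla\phi^*(Y_t^i)\,\sigma(Y_t^i)$ has covariance $\nabla^2\phi^*\cdot 2\lambda[\nabla^2\phi^*]^{-1}\cdot\nabla^2\phi^* = 2\lambda\nabla^2\phi^*(Y_t^i) = 2\lambda G_t^i(X_t^i)$, matching \eqref{eq:weighted}. For the drift, the first-order term $\nabla^2\phi^*(Y_t^i)\,b^i = -G_t^i(X_t^i)\nabla\frac{\delta F(\mu_0)}{\delta\mu}(X_0^i)$; adding and subtracting $G_t^i(X_t^i)\nabla\frac{\delta F(\mu_t)}{\delta\mu}(X_t^i)$ produces exactly $-G_t^i(X_t^i)\nabla\frac{\delta F(\mu_t)}{\delta\mu}(X_t^i) + \pi_t^i$ with $\pi_t^i$ as defined. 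It remains to identify the second-order It\^o correction with the divergence term $\nabla\cdot G_t^i(X_t^i)$; here I would use the standard fact (see e.g. the derivation of Riemannian/mirror Langevin, \cite{chewi2023logconcavesampling}) that for $\psi = \nabla\phi^*$ one has $\tfrac12\sum_{l,m}\partial_l\partial_m\psi_k\,(\nabla^2\phi^*)^{-1}_{lm}\cdot 2\lambda$... more precisely, writing $G = [\nabla^2\phi(\psi(y))]^{-1} = \nabla^2\phi^*(y)$ and using $\partial_m\psi_k(y) = (\nabla^2\phi^*(y))_{km}$, a direct computation shows $\tfrac12\,\mathrm{tr}[D^2\psi_k\cdot\sigma\sigma^\top] = \lambda\,(\nabla\cdot G)_k$ evaluated appropriately. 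This is the matrix-calculus heart of the lemma and mirrors the primal-space reformulation of MLD already invoked in Section \ref{sec:mirror_ld}.

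The main obstacle is the bookkeeping in the second-order term: one must carefully convert derivatives of $\nabla\phi^*$ with respect to the dual variable $y$ into the divergence of $G = [\nabla^2\phi]^{-1}$ as a function of the primal variable $x$, keeping track of the chain rule through $x = \nabla\phi^*(y)$ and the well-known identity $\partial_k([\nabla^2\phi]^{-1}) = -[\nabla^2\phi]^{-1}(\partial_k\nabla^2\phi)[\nabla^2\phi]^{-1}$. The thrice-differentiability of $\phi$ assumed in Section \ref{sec:mirror_ld} ensures all these quantities are well-defined. Everything else — the drift splitting and the diffusion matching — is a short algebraic consequence of Lemma \ref{lem:properties_of_phi}. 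I would therefore present the diffusion and first-order drift computations briefly, then devote the bulk of the argument to the It\^o correction term, citing the analogous MLD computation for the divergence identity and noting the finite-particle version follows by applying the componentwise argument on $\mathcal{X}^N$.
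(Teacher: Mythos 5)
Your proposal is correct and follows essentially the same route as the paper, whose entire proof is to invoke It\^o's lemma and defer to \cite[Appendix C]{jiang2021mirror}; you simply carry out the computation (diffusion matching via $\nabla^2\phi^*(Y_t^i)=[\nabla^2\phi(X_t^i)]^{-1}$, the add-and-subtract producing $\pi_t^i$, and the identification of the It\^o correction with the divergence of $G$) that the paper leaves to the citation. One small remark: your second-order term correctly evaluates to $\lambda\,\nabla\cdot G_t^i(X_t^i)$, consistent with the weighted-Langevin form in \cite{wibisono2019proximal,jiang2021mirror}; the absence of the factor $\lambda$ in the lemma's display appears to be a typo in the paper rather than an error in your argument.
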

\begin{proof}
    This follows from It\^o's lemma, e.g. see \citet[Appendix C]{jiang2021mirror}.
\end{proof}

The following result generalizes Lemma 4 of \citet{jiang2021mirror}.

\begin{lemma}\label{lem:jiang4}
    Under Assumptions \ref{asmp:rel_lip_and_smooth} and \ref{asmp:self_concordance}, and consider the updates $X_t^i = \nabla\phi^*(Y_t^i)$ following \eqref{eq:pre_weighted}, it holds \[
    \frac{1}{M}[\nabla^2\phi(X_t^i)]^{-1}\preceq [\nabla^2 \phi(X_t^i)]^{-1}\nabla^2\phi(X_0^i)[\nabla^2\phi(X_t^i)] \preceq M [\nabla^2\phi(X_t^i)]^{-1}
    \]
    in expectation with respect to the Brownian motion for \[
    M = \frac{1-\exp(-\frac{1}{16c_1^2t})}{(1-c_1(tM_1 + 2\sqrt{td}))^2} + \exp\left(-\frac{1}{16c_1^2t} + \frac{2c_1D}{\sqrt{c_2}}\right)
    \]
    if $t \le \min\left\{\frac{1}{2c_1M_1}, \frac{1}{16c_1^2d}\right\}$, and deterministically bounded by $M = \exp\left(\frac{2c_1D}{\sqrt{c_2}}\right)$. We use the convention $M = 1$ when $c_1 = 0$ and $D = \infty$. 
\end{lemma}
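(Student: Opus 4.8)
The plan is to track how the Hessian ratio $[\nabla^2\phi(X_t^i)]^{-1}\nabla^2\phi(X_0^i)[\nabla^2\phi(X_t^i)]^{-1}$ evolves along the process \eqref{eq:pre_weighted}, following the strategy of \cite[Lemma 4]{jiang2021mirror} but carrying the mean-field drift through. Since \eqref{eq:pre_weighted} has drift depending only on the frozen initial point $X_0^i$, we have $X_t^i = \nabla\phi^*(Y_t^i)$ with $Y_t^i = Y_0^i - t\,\nabla\frac{\delta F(\mu_0)}{\delta\mu}(X_0^i) + \sqrt{2\lambda}\int_0^t [\nabla^2\phi(X_s^i)]^{1/2}dB_s^i$ (equivalently, using the dual diffusion form). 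First I would reduce the two-sided matrix inequality to a scalar bound: it suffices to control $\langle u, \nabla^2\phi(X_0^i)u\rangle / \langle u, \nabla^2\phi(X_t^i)u\rangle$ uniformly over directions $u$, which by the local-norm notation is $\|u\|_{\nabla^2\phi(X_0^i)}^2 / \|u\|_{\nabla^2\phi(X_t^i)}^2$. The self-concordance assumption (Assumption \ref{asmp:self_concordance}) is exactly what lets one convert a bound on the dual-norm displacement $\|\nabla\phi(X_t^i) - \nabla\phi(X_0^i)\|_{[\nabla^2\phi(X_t^i)]^{-1}} = \|Y_t^i - Y_0^i\|_{\nabla^2\phi^*(Y_t^i)}$ into a multiplicative distortion of the Hessian: standard self-concordant calculus gives $(1-r)^2 \nabla^2\phi(X_0^i) \preceq \nabla^2\phi(X_t^i) \preceq (1-r)^{-2}\nabla^2\phi(X_0^i)$ whenever the appropriate local distance $r = r(t)$ between the two points is less than $1$.

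The key steps, in order, would be: (1) write $Y_t^i - Y_0^i$ as the sum of a deterministic drift term of local-norm size at most $tM_1$ (by Assumption \ref{asmp:rel_lip_and_smooth}(i), which bounds $\|\nabla\frac{\delta F(\mu_0)}{\delta\mu}(X_0^i)\|_{[\nabla^2\phi(X_0^i)]^{-1}} \le M_1$) and a martingale term $\sqrt{2\lambda}\int_0^t[\nabla^2\phi(X_s^i)]^{1/2}dB_s^i$; (2) bound the expected squared local norm of the martingale part: by It\^o isometry and the fact that $[\nabla^2\phi(X_s^i)]^{1/2}$ pre- and post-composed with the metric at time $t$ is being controlled, one gets a term of order $\lambda t d$ — but this is circular, since the metric distortion is what we are trying to bound, so the argument must be run as a self-improving estimate (Gr\"onwall / bootstrap in $t$): assume the distortion is at most $M$ on $[0,t]$, derive that $\mathbb{E}\,r(t)^2 \lesssim c_1^2(t^2M_1^2 + \lambda t d \cdot M)$, and close the loop for $t$ small enough that $r(t) < 1$; (3) use the submartingale/exponential-supermartingale structure (as in \cite{jiang2021mirror}) to upgrade the moment bound on $r(t)$ to the stated $\xi_t$-type expression with the $\exp(-1/(16c_1^2 t))$ tail — this comes from a reflection or time-change argument showing the diffusion's exit from a self-concordant ball is exponentially unlikely; (4) separately, for the deterministic bound $M = \exp(2c_1 D/\sqrt{c_2})$, use that the total dual displacement over the whole run is at most $D = \max_{u,v}\|\nabla\phi(u)-\nabla\phi(v)\|$ and integrate the self-concordance differential inequality $\frac{d}{ds}\log\|u\|_{\nabla^2\phi(X_s^i)} \le c_1\|\dot{(\nabla\phi(X_s^i))}\|_{[\nabla^2\phi(X_s^i)]^{-1}}$ along the path, with $c_2 = c = \lambda_{\min}(\nabla^2\phi)$ converting the Euclidean bound $D$ into a local-norm bound; (5) handle the degenerate convention $c_1 = 0$ (affine $\phi$, no distortion, $M=1$) trivially.

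The main obstacle I anticipate is step (2)–(3): making the bootstrap rigorous. The diffusion coefficient $[\nabla^2\phi(X_s^i)]^{1/2}$ depends on the path, so the It\^o-isometry estimate for $\mathbb{E}\|Y_t^i - Y_0^i\|_{[\nabla^2\phi(X_t^i)]^{-1}}^2$ cannot be computed in closed form; one must either (a) condition on the event that the metric stays within a factor $M$ up to time $t$ and show its complement contributes negligibly (the source of the $\exp(-1/(16c_1^2 t))$ term, which looks like a Gaussian tail for the exit time of the drift-free diffusion from a ball of radius $\sim 1/c_1$), or (b) follow \cite{jiang2021mirror} verbatim in setting up an auxiliary exponential supermartingale $\exp(\alpha \log\|u\|_{\nabla^2\phi(X_t^i)} - \text{(compensator)})$ and apply optional stopping. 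Since the lemma statement explicitly says "generalizes Lemma 4 of \cite{jiang2021mirror}", the cleanest route is to check that the only place the unconstrained/linear structure was used there was the drift bound, and that replacing $\|\nabla f(X_0^i)\|$ by the mean-field quantity $\|\nabla\frac{\delta F(\mu_0)}{\delta\mu}(X_0^i)\|_{[\nabla^2\phi(X_0^i)]^{-1}} \le M_1$ (Assumption \ref{asmp:rel_lip_and_smooth}(i)) leaves every subsequent computation intact — the interaction with other particles does not appear because each $Y_t^i$ has its own independent Brownian motion and its drift is frozen at $t=0$. I would therefore present the proof as: "The argument of \cite[Lemma 4]{jiang2021mirror} applies mutatis mutandis, with the only modification being the use of Assumption \ref{asmp:rel_lip_and_smooth}(i) in place of the Lipschitz bound on $\nabla f$; we reproduce the key estimates for completeness," and then carry out steps (1), (2)–(3) with the bootstrap and (4) the deterministic bound explicitly.
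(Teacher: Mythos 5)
Your proposal follows essentially the same route as the paper's proof: decompose $Y_t^i-Y_0^i$ into a drift part bounded by $tM_1$ via Assumption \ref{asmp:rel_lip_and_smooth}(i) and a Gaussian part controlled by $\chi^2$ concentration, invoke the Nesterov self-concordance (Dikin-ball) comparison of Hessians on the high-probability event where the dual displacement stays below $1/c_1$, and on the complementary event fall back on the deterministic bound obtained by integrating $|g_i'(s)|\le \tfrac{2c_1}{\sqrt{c_2}}\|Y_t^i-Y_0^i\|_2\, g_i(s)$ along the segment, yielding $\exp(2c_1D/\sqrt{c_2})$. The one point of divergence is your anticipated bootstrap for the path-dependent diffusion coefficient: the paper sidesteps this (following \cite{jiang2021mirror}) by representing the noise increment with covariance frozen at time zero, $\sqrt{2\lambda t\nabla^2\phi(X_0^i)}\,Z_0^i$, so no Gr\"onwall or supermartingale machinery appears --- your concern about the exactness of that representation under the forward (Ahn--Chewi) scheme is legitimate, but your proposed fix is compatible with, rather than contrary to, the paper's argument.
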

\begin{proof}
    By Lemma \ref{lem:properties_of_phi}, this implies from \citet{nesterov1994interior} that for $\|Y_t^i - Y_0^i\|_{\nabla^2\phi^*(Y_0^i)}\le \frac{1}{c_1}$, it holds
    \begin{align*}
    (1-c_1\|Y_t^i - Y_0^i\|_{\nabla^2\phi^*(Y_0^i)})^2\nabla^2\phi^*(Y_0^i)\preceq \nabla^2\phi^*(Y_t^i)\\
    \preceq \frac{1}{(1-c_1\|Y_t^i - Y_0^i\|_{\nabla^2\phi^*(Y_0^i)})^2}\nabla^2\phi^*(Y_0^i).
    \end{align*}
Since
\begin{align*}
\nabla\phi(X_t^i)-\nabla\phi(X_0^i) = Y_t^i - Y_0^i = -t\nabla \frac{\delta F(\mu_0)}{\delta\mu}(X_0^i) + \sqrt{2\lambda t\nabla^2\phi(X_0^i)}\cdot Z_0^i
\end{align*}
we have \begin{align*}
    \|Y_t^i-Y_t^0\|_{\nabla^2\phi^*(Y_0^i)} &= \left\|-t \nabla \frac{\delta F(\mu_0)}{\delta\mu}(X_0^i) + \sqrt{2\lambda t\nabla^2\phi(X_0^i)}Z_0^i\right\|^2_{[\nabla^2\phi(X_0^i)^{-1}]}\\
    &= t^2 \left\|\nabla\frac{\delta F(\mu_0)}{\delta \mu}(X_0^i)\right\|^2_{[\nabla^2\phi(X_0^i)]^{-1}} + 2\lambda t \|Z_0^i\|_2^2\\
    &\le t^2M_1^2  + 2\lambda t\|Z_0^i\|_2^2.
\end{align*}
Using $\chi^2$ concentration from \citet{Laurent2000AdaptiveEO}, we have
\begin{align*}
\Pr[\|Z_0^i\|_2^2\ge (\sqrt{d} + \sqrt{\delta})^2]\le \exp(-\delta)
\end{align*}
for $t\le \min \left\{\frac{1}{2c_1M_1}, \frac{1}{16c_1^2d}\right\}$. With probability at least $1 - \exp(-d)\ge 1 - \exp\left(-\frac{1}{16c_1^2d}\right)$ over the draw of $Z_0^i$, it follows that
\begin{align*}
\|Y_t^i - Y_0^i\|_{\nabla^2\phi^*(Y_0^i)}\le tM_1 + 2\sqrt{td}< \frac{1}{c_1}.
\end{align*}
Thus, we have
\begin{align*}
(1-c_1(tM_1 + 2\sqrt{\lambda td}))^2I_d\preceq \nabla^2\phi(X_0^i)^{1/2}[\nabla^2\phi(X_t^i)]^{-1}\nabla^2\phi(X_0)^{1/2}
\preceq \frac{1}{(1-c_1(tM_1 + 2\sqrt{\lambda td}))^2}I_d.
\end{align*}
For the remaining probability $\exp\left(-\frac{1}{16c_1^2d}\right)$, consider the function
\begin{align*}
g_i(s) := \langle u, \nabla^2\phi^*(Y_0^i + s(Y_t^i - Y_0^i))u\rangle =: \nabla^2\phi^*(Y_s^i)[u,u].
\end{align*}
From self-concordance, we have \begin{align*}
    |g_i'(s)| &= |\nabla^3\phi^*(Y_s^i)[u,u,Y_t^i-Y_0^i]\\
    &\le 2c_1\|Y_t^i-Y_0^i\|_{\nabla^2\phi^*(Y_s)}\|u\|_{\nabla^2\phi^*(Y_s^i)}\\
    &= 2c_1\|Y_t^i-Y_0^i\|_{\nabla^2\phi^*(Y_s)}\cdot g_i(s)\\
    &\le \frac{2c_1}{\sqrt{c_2}}\|Y_t^i-Y_0^i\|_{2}\cdot g_i(s)\\
    &\le \frac{2c_1}{\sqrt{c_2}}D\cdot g_i(s),
\end{align*}
which implies $|{\log g(1)} - \log g(0)|\le \frac{2c_1D}{\sqrt{c_2}}$. Then, we deduce \begin{align*}
    \exp\left(-\frac{2c_1D}{\sqrt{c_2}}\right)\nabla^2\phi^*(Y_0^i)\preceq \nabla^2\phi^*(Y_t^i)\preceq \exp\left(\frac{2c_1D}{\sqrt{c_2}}\right)\nabla^2\phi^*(Y_0^i)
\end{align*}
and 
\begin{align*}
    \exp\left(-\frac{2c_1D}{\sqrt{c_2}}\right)I_d\preceq \nabla^2\phi(X_0^i)^{1/2}[\nabla^2\phi(X_t^i)]^{-1}\nabla^2\phi(X_0^i)^{1/2}\preceq \exp\left(\frac{2c_1D}{\sqrt{c_2}}\right)I_d.
\end{align*}
Thus, we deduce that, in expectation with respect to $Z_0^i$, $M$ is upper bounded by \[
M = \frac{1-\exp(-\frac{1}{16c_1^2t})}{(1-c_1(tM_1 + 2\sqrt{\lambda td}))^2} + \exp\left(-\frac{1}{16c_1^2t} + \frac{2c_1D}{\sqrt{c_2}}\right),
\]
which goes to $1$ as $t\to 0$. This concludes the proof.
\end{proof}

\subsection{Proof of Theorem \ref{thm:main}}\label{app:thm_main}

\textit{Proof of Theorem \ref{thm:main}.}
We combine the analyses of \citet{jiang2021mirror} and \citet{nitanda2024improved}. We abuse notation to identify the probability distribution with its density function with respect to the Lebesgue measure. For instance, we denote by $\mu_*^{(N)}(\mathbf{x})$ the density of $\mu_*^{(N)}$.

Denote by $\mu_{0t}(\mathbf{x}_0,\mathbf{x}_t)$ the joint probability distribution of $(\mathbf{X}_0,\mathbf{X}_t)$ for time $t$, and by $\nu_{t|0}$, $\nu_{0|t}$, and $\nu_0$, $\nu_t$ the conditional and marginal distributions. We see that $\nu_0 = \mu_k^{(N)} = \operatorname{Law}(\mathbf{X}_k)$, $\nu_\eta = \mu_{k+1}^{(N)}=\operatorname{Law}(\mathbf{X}_{k+1})$, and \[
\nu_{0t}(\mathbf{x}_0,\mathbf{x}_t) = \nu_0(\mathbf{x}_0)\nu_{t|0}(\mathbf{x}_t|\mathbf{x}_0) = \nu_t(\mathbf{x}_t)\nu_{0|t}(\mathbf{x}_0|\mathbf{x}_t).
\]
Then, using $F(\mathbf{x}_0) = F(\mu_{\mathbf{x}_0})$ and the resulting equality
\begin{align*}
N\nabla_{x^i}F(\mathbf{x}_0) = \nabla \frac{\delta F(\mu_{\mathbf{x}_0})}{\delta\mu}(x_0^i),
\end{align*}
we also correspondingly obtain $G_0(\mathbf{x})$ and $\pi_0(\mathbf{x})$ as the diffusion and drift terms at time $t$ when $\mathbf{x}_t = \mathbf{x}$ with $\mathbf{x}_0$ at time $t = 0$.\footnote{Here, note that $G_0(\mathbf{x})\in\mathbb{R}^{dN\times dN}$ is the block-diagonal covariance matrix.} Then using the Fokker-Planck equation for the conditional density $\nu_{t|0}(\mathbf{x}_t|\mathbf{x}_0)$ \citep[Lemma 3]{wibisono2019proximal} and the argument in \citet{jiang2021mirror}, we obtain the Fokker-Planck equation of $\nu_t$: \begin{align}
    &\frac{\partial \nu_t(\mathbf{x})}{\partial t}\notag\\
    &= \int \frac{\partial\nu_{t|0}(\mathbf{x}|\mathbf{x}_0)}{\partial t}\nu_0(\mathbf{x}_0)d\mathbf{x}_0\notag \\
    &= \int[-\lambda\nabla \cdot (\nu_{t|0}(\nabla \cdot G_0(\mathbf{x}) - G_0(\mathbf{x})N\nabla F(\mathbf{x})) + \lambda\langle \nabla^2, \nu_{t|0}G_0(\mathbf{x})\rangle\notag \\
    &\qquad - \nabla\cdot (\nu_{t|0}\pi_0(\mathbf{x}))]\nu_0(\mathbf{x}_0)d\mathbf{x}_0\notag\\
    &= \lambda \nabla \cdot \left(\nu_{0|t}\int-(\nu_t(\nabla\cdot G_0(\mathbf{x})-G_0(\mathbf{x})N\nabla F(\mathbf{x}))) + \nabla \cdot(\nu_tG_0(\mathbf{x}))d\mathbf{x}_0\right)\notag\\
    &\qquad - \nabla \cdot \left(\nu_t\int \nu_{0|t}\pi_0(\mathbf{x})d\mathbf{x}_0\right)\notag\\
    &= \lambda \nabla \cdot \left(\nu_{0|t}\int G_0(\mathbf{x})\nabla \nu_t+ \nu_tG_0(\mathbf{x})N\nabla F(\mathbf{x})d\mathbf{x}_0)\right) - \nabla \cdot \left(\nu_t \int \nu_{0|t}\pi_0(\mathbf{x})d\mathbf{x}_0\right)\notag\\
    &= \lambda\nabla \cdot \left(\nu_{0|t}\int\left(\nu_t G_0(\mathbf{x})\nabla \log\frac{\nu_t}{\mu_*^{(N)}}(\mathbf{x})\right)d\mathbf{x}_0\right)- \nabla \cdot 
    \left(\nu_t\int\nu_{0|t}\pi_0(\mathbf{x})d\mathbf{x}_0\right).\label{eq:condition_pde}
\end{align}
Now we can control the decrease of the objective as
\begin{align}
    \frac{d\mathcal{L}^{(N)}}{dt}(\nu_t) &= \int \frac{\delta\mathcal{L}^{(N)}(\nu_t)}{\delta\mu^{(N)}}(\mathbf{x})\frac{\partial\nu_t}{\partial t}(\mathbf{x})d\mathbf{x}\notag\\
    &= \lambda \int \frac{\delta\mathcal{L}^{(N)}(\nu_t)}{\delta\mu^{(N)}}(\mathbf{x})\nabla \cdot \left(\nu_{0|t}\int\nu_t G_0(\mathbf{x})\nabla \log\frac{\nu_t}{\mu_*^{(N)}}(\mathbf{x})d\mathbf{x}_0\right)d\mathbf{x}\notag\\
    &\qquad - \int \frac{\delta\mathcal{L}^{(N)}(\nu_t)}{\delta\mu^{(N)}}(\mathbf{x})\nabla \cdot 
    \left(\nu_t\int\nu_{0|t}\pi_0(\mathbf{x})d\mathbf{x}_0\right)d\mathbf{x}\notag\\
    &= -\lambda \int \nu_{0|t}\int\left\langle\nabla\frac{\delta\mathcal{L}^{(N)}(\nu_t)}{\delta\mu^{(N)}},\nu_t G_0\nabla \log\frac{\nu_t}{\mu_*^{(N)}}\right\rangle d\mathbf{x}_0d\mathbf{x}\notag\\
    &\qquad + \int \nu_t\left\langle\nabla\frac{\delta\mathcal{L}^{(N)}(\nu_t)}{\delta\mu^{(N)}}, \int\nu_{0|t}\pi_0d\mathbf{x}_0\right\rangle d\mathbf{x}\notag\\
    &= -\lambda^2 \int \nu_{0|t}\int\left\langle\nabla \log\frac{\nu_t}{\mu_*^{(N)}},\nu_t G_0\nabla \log\frac{\nu_t}{\mu_*^{(N)}}\right\rangle d\mathbf{x}_0d\mathbf{x}\notag\\
    &\qquad + \lambda\int \nu_t\left\langle\nabla \log\frac{\nu_t}{\mu_*^{(N)}}, \int\nu_{0|t}\pi_0d\mathbf{x}_0\right\rangle d\mathbf{x}\notag\\
    &= -\lambda^2\underset{\nu_{t}}{\mathbb{E}}\bigg\|\nabla \log\frac{\nu_t}{\mu_*^{(N)}}\bigg\|_G^2+ \lambda\underset{\nu_{0t}}{\mathbb{E}}\left\langle \pi, \nabla \log\frac{\nu_t}{\mu_*^{(N)}}\right\rangle\label{eq:for_refined_analysis}\\
&\le -\lambda^2\underset{\nu_{t}}{\mathbb{E}}\bigg\|\nabla \log\frac{\nu_t}{\mu_*^{(N)}}\bigg\|_{[\nabla^2\phi]^{-1}}^2+ \frac{1}{2}\underset{\nu_{0t}}{\mathbb{E}}\|\pi\|_{\nabla^2\phi}^2 + \frac{\lambda^2}{2}\underset{\nu_t}{\mathbb{E}}\left\|\nabla \log\frac{\nu_t}{\mu_*^{(N)}}\right\|^2_{[\nabla^2\phi]^{-1}} \notag\\
     &\le -\lambda^2\CLSI{\KL(\nu_t\sep\mu_*^{(N)})} +  \frac{1}{2}\underset{\nu_{0t}}{\mathbb{E}}\|\pi\|^2_{\nabla^2\phi}\notag\\
     &\le -\lambda\CLSI\left(\mathcal{L}^{(N)}(\nu_t) - \mathcal{L}^{(N)}(\mu_*^{(N)})\right) + \frac{1}{2}\underset{\nu_{0t}}{\mathbb{E}}\|\pi\|^2_{\nabla^2\phi},\label{eq:dL_dt}
\end{align}
using \eqref{eq:condition_pde}, integration by parts, Young's inequality, and Lemma \ref{lem:entropy_sandwich}.

Next, using Assumptions  \ref{asmp:rel_lip_and_smooth}, \ref{asmp:self_concordance}, and \eqref{eq:pre_weighted}, for
\begin{align*}
M = \exp\left(\frac{2c_1D}{\sqrt{c_2}}\right),\quad \xi_t = \frac{1-\exp(-\frac{1}{16c_1^2t})}{(1-c_1(tM_1 + 2\sqrt{td}))^2} + \exp\left(-\frac{1}{16c_1^2t} + \frac{2c_1D}{\sqrt{c_2}}\right),
\end{align*}
we may bound
\begin{align}
    \underset{\nu_{0t}}{\mathbb{E}}\|\pi\|^2_{\nabla^2\phi}    &=  \underset{(\mathbf{x}_0,\mathbf{x}_t)\sim\nu_{0t}}{\mathbb{E}} \left[\sum_{i=1}^N\|\pi^i\|_{\nabla^2\phi(\mathbf{x}_t^i)}^2\right]\notag \\
    &\le M_2^2\underset{(\mathbf{x}_0,\mathbf{x}_t)\sim\nu_{0t}}{\mathbb{E}}\left[\sum_{i=1}^N\|\nabla\phi(\mathbf{x}_0^i)-\nabla\phi(\mathbf{x}_t^i)\|_{[\nabla^2\phi(\mathbf{x}_t^i)]^{-1}}^2\right]\notag\\
    &\le M_2^2\underset{(\mathbf{x}_0,\mathbf{x}_t)\sim\nu_{0t}}{\mathbb{E}}\left[\sum_{i=1}^N\left\|\nabla\frac{\delta F(\mu_{\mathbf{x}_0})}{\delta\mu}(\mathbf{x}_0^i) - \nabla\frac{\delta F(\mu_{\mathbf{x}_t})}{\delta\mu}(\mathbf{x}_t^i)\right\|_{[\nabla^2\phi(\mathbf{x}_t^i)]^{-1}}^2\right]\notag\\
    &\le M_2^4\underset{(\mathbf{x}_0,\mathbf{x}_t)\sim\nu_{0t}}{\mathbb{E}}\left[NW_{2,\phi}^2(\mu_{\mathbf{x}_0},\mu_{\mathbf{x}_t}) + \sum_{i=1}^N\left\|\mathbf{x}_0^i - \mathbf{x}_t^i\right\|_{[\nabla^2\phi(\mathbf{x}_t^i)]^{-1}}^2\right]\notag\\
    &\le 2M_2^4\underset{(\mathbf{x}_0,\mathbf{x}_t)\sim\nu_{0t}}{\mathbb{E}}\left[ \sum_{i=1}^N\left\|\mathbf{x}_0^i - \mathbf{x}_t^i\right\|_{[\nabla^2\phi(\mathbf{x}_t^i)]^{-1}}^2\right]\notag\\
    &\le 2M_2^4\underset{(\mathbf{x}_0,\mathbf{x}_t)\sim\nu_{0t}}{\mathbb{E}}\left[ \sum_{i=1}^N\left\|-t\nabla \frac{\delta F(\mu_{\mathbf{x}_0})}{\delta\mu}(\mathbf{x}_0^i) + \!\sqrt{2\lambda}\int_0^t[\nabla^2\phi(\mathbf{x}_s^i)]^{1/2}dB_s\right\|_{[\nabla^2\phi(\mathbf{x}_t^i)]^{-1}}^2\right]\notag\\
    &\le 4Nt^2M_1^2M_2^4\xi_t + 8\lambda tdNM_2^4M\notag\\
    &= 4NM_2^4(t^2M_1^2\xi_t + 2\lambda tdM).\label{eq:pi_bound}
\end{align}
Here we have used Assumption \ref{asmp:rel_lip_and_smooth}, It\^o's isometry, Jensen's inequality, and Lemma \ref{lem:jiang4}.

Now note that $\xi_t\le M$ deterministically, so for $t\in [0, \eta]$, we have
\begin{align*}
\frac{1}{2}\mathbb{E}_{\nu_{0t}}\|\pi\|_{\nabla^2\phi}^2\le N\delta_\eta, \quad\text{where}\quad \delta_\eta:=2\eta M_2^4M(\eta M_1^2 + 2\lambda d).
\end{align*}
By combining \eqref{eq:dL_dt} and \eqref{eq:pi_bound}, we see that \begin{align*}
    \frac{d}{dt}\left(\mathcal{L}^{(N)}(\nu_t) - \mathcal{L}^{(N)}(\mu_*^{(N)}) -\frac{N\delta_\eta}{2\CLSI\lambda}\right)\le -\CLSI\lambda \left(\mathcal{L}^{(N)}(\nu_t) - \mathcal{L}^{(N)}(\mu_*^{(N)}) -\frac{N\delta_\eta}{2\CLSI\lambda}\right).
\end{align*}
Noting that $\nu_\eta =\mu_{k+1}^{(N)}$ and $\nu_0 = \mu_{k}^{(N)}$, Gr\"onwall's inequality yields \[
\mathcal{L}^{(N)}(\nu_{k+1}) - \mathcal{L}^{(N)}(\mu_*^{(N)}) -\frac{N\delta_\eta}{2\CLSI\lambda} \le \exp\left(-\CLSI\lambda\eta\right)\left(\mathcal{L}^{(N)}(\nu_k) - \mathcal{L}^{(N)}(\mu_*^{(N)}) -\frac{N\delta_\eta}{2\CLSI\lambda}\right).
\]
Iterating yields \[
\mathcal{L}^{(N)}(\nu_{k}) - \mathcal{L}^{(N)}(\mu_*^{(N)}) -\frac{N\delta_\eta}{2\CLSI\lambda} \le \exp\left(-\CLSI\lambda\eta k\right)\left(\mathcal{L}^{(N)}(\nu_0) - \mathcal{L}^{(N)}(\mu_*^{(N)}) -\frac{N\delta_\eta}{2\CLSI\lambda}\right),
\]
from which the claim follows.
\qed

\subsection{Proof of Theorem \ref{thm:stoch_grad}}\label{app:thm_stoch_grad}
\begin{proof}
    We follow the proof of Theorem \ref{thm:main}, using a similar notation, and start with \eqref{eq:dL_dt}:\begin{align}
        \frac{d\mathcal{L}^{(N)}}{dt}(\nu_t) \le -\lambda\CLSI\left(\mathcal{L}^{(N)}(\nu_t) - \mathcal{L}^{(N)}(\mu_*^{(N)})\right) + \frac{1}{2}\underset{\nu_{0t}}{\mathbb{E}}\|\widetilde{\pi}\|^2_{\nabla^2\phi},\label{eq:dL_dt_stoch}
    \end{align}
    where 
    $\widetilde{\pi}^i = G_t^i(X_t^i)\left(\nabla\frac{\delta F(\mu_t)}{\delta\mu}(X_t^i)-g_0^i\right)$
    is the stochastic gradient variation in Lemma \ref{lem:jiang4}. Continuing the proof, we have 
    \begin{align}
    \underset{\nu_{0t}}{\mathbb{E}}\|\widetilde{\pi}\|^2_{\nabla^2\phi} &=\underset{(\mathbf{x}_0,\mathbf{x}_t)\sim\nu_{0t}}{\mathbb{E}} \left[\sum_{i=1}^N\|\widetilde{\pi}^i\|_{\nabla^2\phi(\mathbf{x}_t^i)}^2\right]\notag \\
    &= \underset{(\mathbf{x}_0,\mathbf{x}_t)\sim\nu_{0t}}{\mathbb{E}} \left[\sum_{i=1}^N\left\|G_t^i(\mathbf{x}_t^i)\left(\nabla\frac{\delta F(\mu_{\mathbf{x}_t})}{\delta\mu}(\mathbf{x}_t^i)-g_0^i\right)\right\|_{\nabla^2\phi(\mathbf{x}_t^i)}^2\right]\notag\\
    &\le 2\E[\|\pi\|_{\nabla^2\phi}^2] + 2\underset{(\mathbf{x}_0,\mathbf{x}_t)\sim\nu_{0t}}{\mathbb{E}} \left[\sum_{i=1}^N\left\|G_t^i(\mathbf{x}_t^i)\left(\nabla\frac{\delta F(\mu_{\mathbf{x}_t})}{\delta\mu}(\mathbf{x}_0^i)-g_0^i\right)\right\|_{\nabla^2\phi(\mathbf{x}_t^i)}^2\right],\notag
\end{align}
where we use $\|a+b\|^2\le 2\|a\|^2+2\|b\|^2$. The first term is deterministically bounded by $4N\delta_\eta$, with $\delta_\eta:=2\eta M_2^4M(\eta M_1^2 + 2\lambda d)$ (Theorem \ref{thm:main}). For the second term, we have \begin{align*}
    &\underset{(\mathbf{x}_0,\mathbf{x}_t)\sim\nu_{0t}}{\mathbb{E}} \left[\sum_{i=1}^N\left\|G_t^i(\mathbf{x}_t^i)\left(\nabla\frac{\delta F(\mu_{\mathbf{x}_t})}{\delta\mu}(\mathbf{x}_0^i)-g_0^i\right)\right\|_{\nabla^2\phi(\mathbf{x}_t^i)}^2\right]\\
    &\qquad= \underset{(\mathbf{x}_0,\mathbf{x}_t)\sim\nu_{0t}}{\mathbb{E}} \left[\sum_{i=1}^N\left\|\nabla\frac{\delta F(\mu_{\mathbf{x}_t})}{\delta\mu}(\mathbf{x}_0^i)-g_0^i\right\|_{[\nabla^2\phi(\mathbf{x}_t^i)]^{-1}}^2\right]\\
    &\qquad\le \frac{N\sigma^2}{c_2}.
\end{align*}
Then combining the above displays yields \[
\frac{1}{2}\underset{\nu_{0t}}{\mathbb{E}}\|\widetilde{\pi}\|^2_{\nabla^2\phi} \le N\left(2\delta_\eta + \frac{\sigma^2}{c_2}\right).
\]
By completing the argument as in Theorem \ref{thm:main}, we deduce 
\[
\mathcal{L}^{(N)}(\nu_{k}) - \mathcal{L}^{(N)}(\mu_*^{(N)}) -\frac{N(2\delta_\eta + \frac{\sigma^2}{c_2})}{2\CLSI\lambda} \le \exp\left(-\CLSI\lambda\eta k\right)\left(\mathcal{L}^{(N)}(\nu_0) - \mathcal{L}^{(N)}(\mu_*^{(N)}) -\frac{N(2\delta_\eta + \frac{\sigma^2}{c_2})}{2\CLSI\lambda}\right),
\]
which concludes the proof.
\end{proof}

\section{Additional Experiments}\label{app:additional_experiments}

In this section, we scale up the experiments in Section~\ref{sec:exp} to high-dimensional settings. We observe similar gains as before, verifying the scalability and effectiveness of mirror MFLD.
\begin{figure}[!h]
    \centering
    \includegraphics[width=0.4\linewidth]{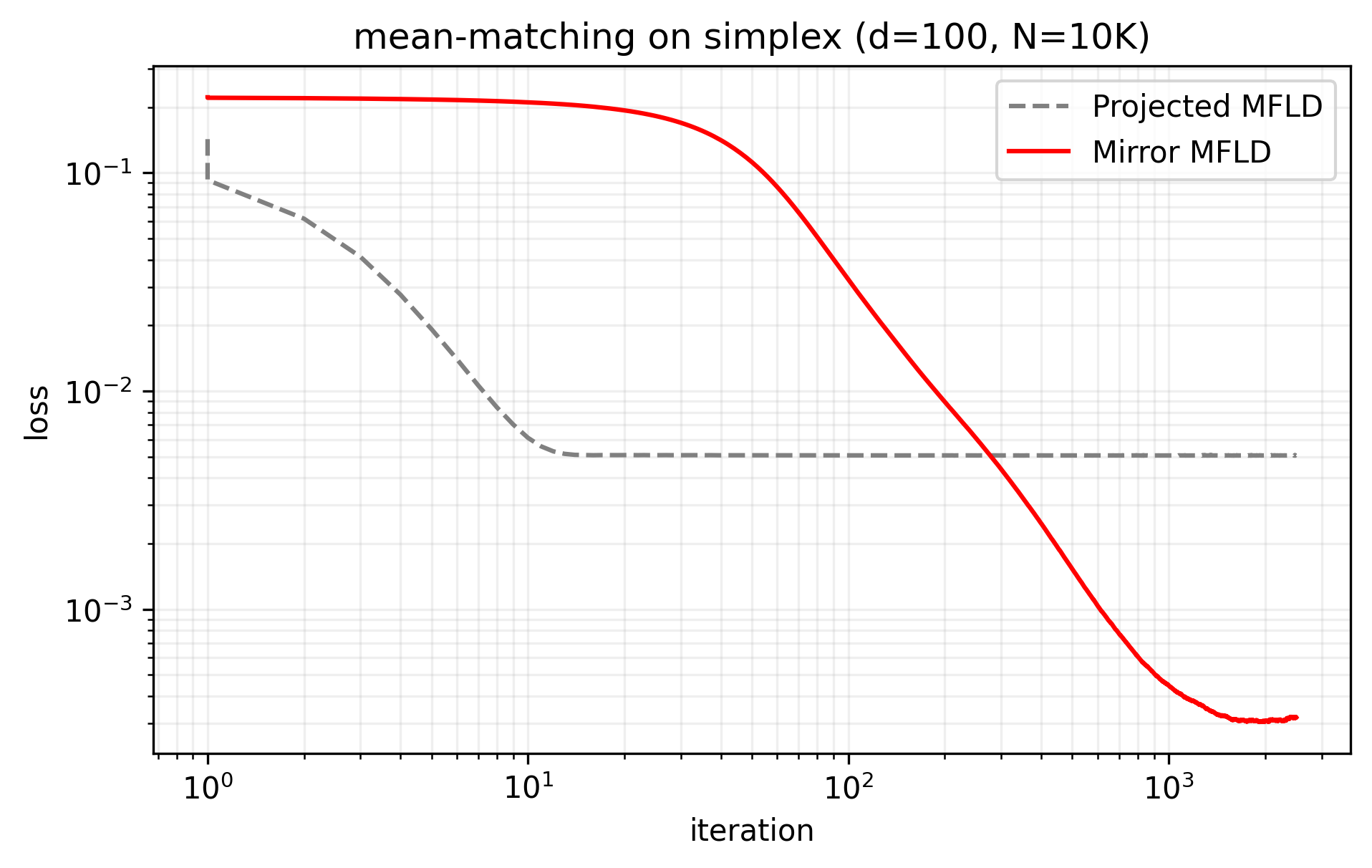}
    \caption{Optimization over the unit simplex, with dimension $d=100$ and randomly sampled Dirichlet mean-matching target. Both mirror and projected MFLD are run using 10K particles with $\eta=0.1$, $\beta=10^{-9}$ over 2500 iterations (log-log plot). Similarly to the low-dimensional experiment, projected MFLD initially converges faster but our mirror MFLD algorithm is able to achieve significantly smaller loss, converging to a more optimal solution. Regarding scalability, the experiment runs in under 10 minutes on a single GPU.}
\end{figure}

\begin{figure}[!h]
    \centering
    \includegraphics[width=0.28\linewidth]{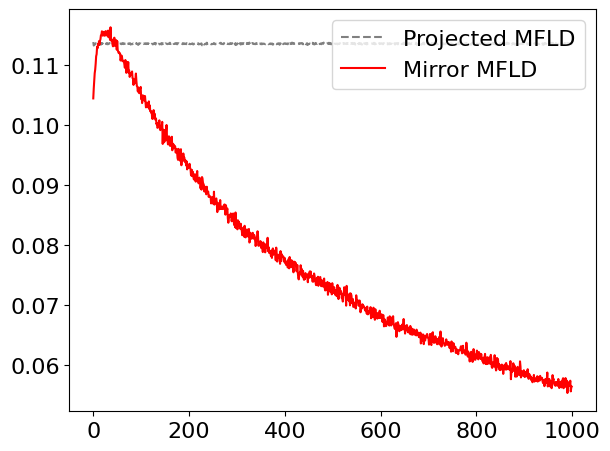}
    \caption{Optimization over the spectraplex $\mathbb{S}_+^d$, with dimension $d=50$. Both algorithms are run with $\eta = 0.3$, $\lambda = 0.1$, $\gamma = 0.02$. Again, we observe projected MFLD essentially is unable to optimize due to the geometry of the set. We consider slightly lower dimensionality as the dimension complexity of this problem scales as $\Theta(d^3)$; this experiment runs in 90 minutes on a CPU.}
\end{figure}

\begin{figure}[!h]
    \centering
    \includegraphics[width=0.4\linewidth]{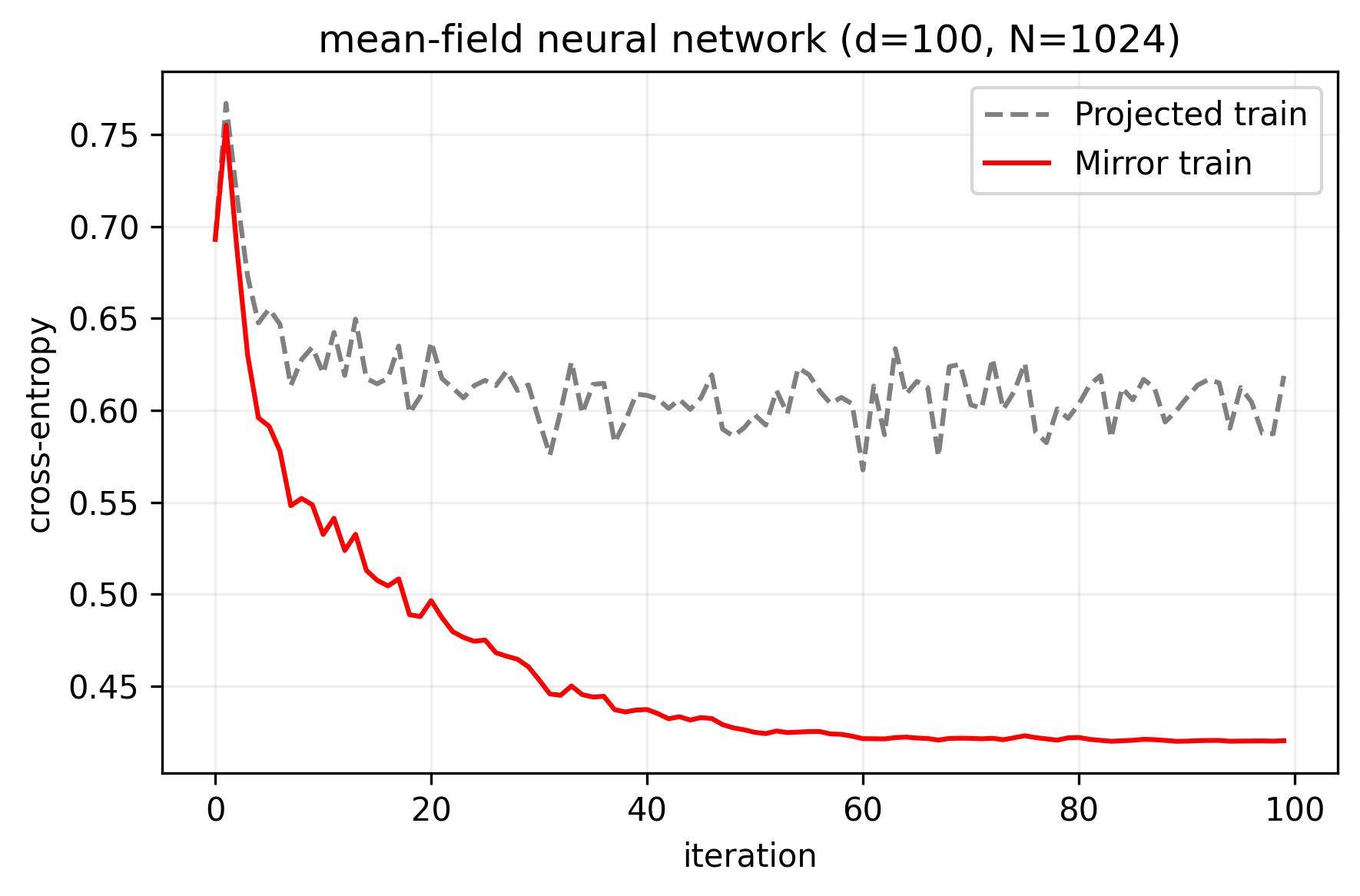}
    \caption{Optimizing a norm-constrained mean-field neural network for classification, with $d=100$, ReLU activations and hidden layer width $1024$. Both algorithms are run with $\eta=0.1$, $\lambda=10^{-3}$ and weights constrained to the unit ball. As in the low-dimensional case, MMFLD decreases the loss at a significantly faster rate than projected MFLD, which stagnates earlier. The experiment runs in under $2$ minutes on a single GPU.}
\end{figure}


\end{document}